\documentclass{article}
\usepackage{iclr2027_conference,times}
\usepackage{amsmath,amssymb,amsfonts,mathtools,amsthm}
\usepackage{graphicx,booktabs,array,makecell,tabularx}
\usepackage[table]{xcolor}
\usepackage{xspace,pifont,microtype,placeins,needspace}
\usepackage{xr-hyper}
\usepackage[hidelinks]{hyperref}
\usepackage{url}
\usepackage{xurl}
\definecolor{smInk}{HTML}{26232A}
\definecolor{smMuted}{HTML}{68636C}
\definecolor{smBlue}{HTML}{5402A4}
\definecolor{smBlueFill}{HTML}{F0E8FA}
\definecolor{smGreen}{HTML}{3F3946}
\definecolor{smGreenFill}{HTML}{F7F5F9}
\definecolor{smOchre}{HTML}{FEBC2A}
\definecolor{smViolet}{HTML}{8438D1}
\definecolor{smRed}{HTML}{F2694C}
\definecolor{smRedFill}{HTML}{FFF0EC}
\definecolor{smNullInk}{HTML}{68636C}
\definecolor{smNullFill}{HTML}{F7F5F9}
\definecolor{smBand}{HTML}{F7F5F9}
\definecolor{smRule}{HTML}{E7E3EA}
\definecolor{smAstraMagenta}{HTML}{B71995}
\definecolor{smExactMB}{HTML}{5402A4}
\definecolor{smTableNavy}{HTML}{3F3946}
\definecolor{smTableHeader}{HTML}{F1EFF4}
\definecolor{smTableSection}{HTML}{F7F5F9}
\definecolor{smTableFocus}{HTML}{F0E8FA}
\definecolor{smTableRule}{HTML}{D4CFDA}
\definecolor{smTableRuleDark}{HTML}{8E8A92}

\newcommand{\ExactMB}{ExactMB\xspace}

\newcommand{\LDSyn}{LD-Syn\xspace}
\newcommand{\LDReal}{LD-Real\xspace}
\newcommand{\MDReal}{MD-3K\xspace}

\newcommand{\TauDiagDefault}{0.5}

\newcommand{\nullsym}{\varnothing}
\newcommand{\missing}{\bot}

\newcommand{\indicator}{\mathbf{1}}
\newcommand{\R}{\mathbb{R}}
\newcommand{\DepthSpace}{\mathcal{Y}}

\newcommand{\dd}{\,\mathrm d}

\newcommand{\Bern}{\operatorname{Bernoulli}}
\newcommand{\sg}{\operatorname{sg}}
\newcommand{\im}{\operatorname{im}}
\newcommand{\KL}{\mathrm{KL}}
\newcommand{\E}{\mathbb{E}}
\newcommand{\Prob}{\mathbb{P}}

\newcommand{\Ltarget}{\mathcal L_{\mathrm{target}}}
\newcommand{\Lcover}{\mathcal L_{\mathrm{cover}}}
\newcommand{\Lbi}{\mathcal L_{\mathrm{bi}}}

\newcommand{\TauDiag}{\tau_{\mathrm{diag}}}

\newcommand{\TableHead}[1]{\textbf{#1}}

\newcommand{\ResultLead}[1]{\par\smallskip\noindent\textcolor{smGreen}{\textbf{#1}}\enspace}

\DeclareMathOperator*{\argmax}{arg\,max}
\DeclareMathOperator{\softplus}{softplus}

\newtheorem{proposition}{Proposition}
\newtheorem{theorem}{Theorem}

\hypersetup{pdftitle={Depth Any Seen: Which Surfaces and How Far?},pdfauthor={Xiaohao Xu; Xiaonan Huang},pdfsubject={Preprint}}

\title{Depth Any Seen: Which Surfaces and How Far?}
\author{Xiaohao Xu \qquad Xiaonan Huang\\
\normalfont Robotics Department, University of Michigan, Ann Arbor\\
\normalfont\texttt{xiaohaox@umich.edu}}
\iclrfinalcopy
\begin{document}
\maketitle
\lhead{Preprint}
\vspace{-12pt}
\begingroup
\renewcommand{\thefootnote}{}
\footnotetext{Video demo: \url{https://youtu.be/D8TIzhjq-2Y}}
\endgroup
\begin{abstract}
When several surfaces are visible along a ray, recovering visible 3D structure from one image requires jointly estimating their presence and metric depth. Depth Any Seen represents these surfaces as image-conditioned multi-Bernoulli depth sets, whose components each contribute one depth or remain absent. Its auxiliary-free Exact Multi-Bernoulli objective (ExactMB) learns depth and presence by marginalizing one-to-one assignments to complete, distinct targets. Our analysis shows that matching expected count can leave component--surface assignment unresolved. We extend real and synthetic layered-depth benchmarks to evaluate depth accuracy, recovered support, and overprediction. Compared to depth stacking, ExactMB reduces overprediction by a relative $88.2\%$ on LD-Real and $80.5\%$ on MD-3K while retaining most ordinal accuracy, with comparable conditional metric-depth error on LD-Syn. Further ablation studies show that ordered assignment improves depth-accurate recall and precision over marginalization, whereas the count-regularized configuration achieves higher deeper-rank precision than ordered assignment at lower recall. Our code will be publicly released.
\end{abstract}

\begin{figure}[!ht]
\centering
\includegraphics[page=20,width=\linewidth]{figures/figure_assets.pdf}
\caption{\textbf{Recovering visible 3D geometry requires two answers: which surfaces are present, and how far away are they?} (a) A single RGB image: purple circles and amber diamonds mark opaque and through-glass rays. (b) Depth Anything V1/V2 \citep{depthanything,depthanythingv2} compress layered visibility into one depth per ray. (c) SeeGroup's released evaluation decoder \citep{seegroup2026} retains up to four ranked relative depths per ray. (d) Depth Any Seen predicts ray-adaptive metric depth sets with explicit absence for unused components. Gray denotes no retained depth.}
\label{fig:iclr-teaser}
\end{figure}

\Needspace{4\baselineskip}
\section{Introduction}
\label{sec:iclr-introduction}

A single image can reveal more surfaces than a single depth map can represent. Through glass, the surface and the scene behind it can both be visible along the same ray. \textbf{Recovering everything seen requires learning which surfaces are present and where they lie.} As shown in Figure~\ref{fig:iclr-teaser}(d), their depths must share a metric scale to capture both variation within layers and separation between them.

Layered depth images, support masks, and adaptive layers represent multiple surfaces \citep{shade1998,dhamo2019object}. LayeredDepth supplies large-scale synthetic layered-depth data \citep{layereddepth2025}, while SeeGroup learns unordered components with variable retained counts \citep{seegroup2026}. Recovery also requires deciding which predictions represent visible surfaces. Count alone is insufficient: extra predictions can offset missing surfaces. Depth error on retained predictions cannot reveal omissions. This motivates joint depth--presence learning and support-aware evaluation.

Depth Any Seen learns these decisions jointly through an \textbf{image-conditioned random finite set}. Its elements are coexisting surfaces with uncertain presence and depth, not alternative estimates of one depth. We leverage multi-Bernoulli set prediction \citep{hess2022object}: components independently emit one depth or remain absent. Our Exact Multi-Bernoulli likelihood (ExactMB) marginalizes one-to-one assignments to complete, distinct targets, coupling presence and depth.

Components receive presence credit for explaining observed depths. These credits sum to the target count, linking geometric explanation to presence supervision. Our analysis distinguishes expected-count correction from the allocation of presence credit among components. \textbf{Matching expected count can therefore leave assignment unresolved.} We consequently test how count-regularized configurations and assignment supervision shape visible-surface recovery beyond ExactMB.

Our joint depth--presence benchmarks combine synthetic metric depth, real-data ordinal relations and membership, and fixed-ray set recovery to assess where surfaces lie and which are recovered.

This evaluation shows that auxiliary-free ExactMB reduces annotation-defined LD-Real overprediction from $98.4\%$ to $11.6\%$ compared with depth stacking, retaining most ordinal accuracy on LD-Real and MD-3K with comparable conditional metric-depth error on LD-Syn. Additional supervision changes this balance: the count-regularized configuration improves aggregate count accuracy and deeper-rank precision, while ordered assignment improves depth-accurate recall and precision over marginalization, with a higher mean MD-3K overprediction rate.

\textbf{Contributions.}
\textbf{1)~Joint depth and membership.} ExactMB specializes the multi-Bernoulli model to dense visible-depth sets with exact assignment marginalization and no auxiliary losses.
\textbf{2)~Count and assignment supervision.} We show that matching expected count need not resolve surface assignment, and that count-regularized configurations and ordered assignment favor different precision--recall trade-offs.
\textbf{3)~Joint depth and presence benchmarking.} We extend existing multilayer-depth benchmarks to jointly evaluate depth accuracy and explicit surface presence. Results show lower overprediction than depth stacking while retaining most ordinal accuracy, and expose localization--support trade-offs across emission and assignment design choices.

\section{Related Work}
\label{sec:iclr-related}

\textbf{Visible-depth and amodal prediction.}
LayeredDepth presents a large-scale synthetic layered-depth dataset \citep{layereddepth2025}, while SeeGroup learns unordered components with intensity and coverage losses \citep{seegroup2026}. MDA permits transparent multilayers through independent weights \citep{bian2026ambiguity}. With some frozen monocular models, \citet{twodepths2026} elicit unordered ordinal depth pairs using RGB and Laplacian Visual Prompting (LVP) in two-layer transparent scenes. Amodal methods extend beyond visible surfaces: LaRI learns stopping for amodal intersections \citep{lari2026}, World Tracing reconstructs visible and occluded intersections \citep{worldtracing2026}, and TRELLIS and SAM 3D generate complete object geometry \citep{trellis2025,trellis22026,sam3d2026}. Our objective instead couples visible-set membership with metric depth to recover coexisting surfaces seen along each ray, including through transparent foreground objects.

\textbf{Feed-forward geometry.}
MVSNet \citep{mvsnet2018} learns depth from calibrated views. DUSt3R \citep{dust3r2024} regresses pairwise pointmaps, and MASt3R \citep{mast3r2024} adds local matching features. VGGT \citep{vggt2025} and $\pi^3$ \citep{pi32026} predict cameras and dense geometry, while MapAnything \citep{mapanything2026} unifies metric reconstruction. Our complementary goal is coexisting visible depths and their membership on a shared metric scale from one image.

\textbf{Layered geometry and rendering.}
Learned layers support view synthesis \citep{tulsiani2018layer}, masked reconstruction \citep{shin2019scene}, and object-wise decomposition \citep{dhamo2019object}, while layered inpainting adds occluded samples \citep{shih2020photography}. For insertion and shading, \citet{engel2024decomposition} infer depth, color, and opacity intervals from semitransparent volume renderings.

\textbf{Transparent-scene depth and reconstruction.}
ClearGrasp \citep{sajjan2020} and TransCG \citep{fang2022transcg} advance RGB-D completion, and local implicit functions jointly predict termination probability and position \citep{zhu2021implicit}. Depth4ToM \citep{costanzino2023depth4tom}, MODEST \citep{liu2025modest}, and SeeClear \citep{seeclear2026} recover selected surface depth. For multilayer recovery, ASGrasp \citep{shi2024asgrasp} reconstructs two layers from RGB and active stereo for grasping, while DepthFocus \citep{depthfocus2026} selects layers through distance queries on stereo observations. Using only one monocular image, we jointly model presence and metric depth across visible layers.

\begin{figure}[t]
\centering
\includegraphics[page=21,width=\linewidth]{figures/figure_assets.pdf}
\caption{\textbf{One likelihood learns depth and absence.} (a) Our default feed-forward model uses recurrent decoding to explicitly predict depth, scale, and existence probability. Selection and sorting form the output set. (b) ExactMB sums over one-to-one assignments to explain observed returns and score unused components as absent. Ordered assignment retains optional emission. (c) Annotated visibility determines the target return count. Scale describes localization conditional on presence.}
\label{fig:iclr-overview}
\end{figure}

\section{Depth Any Seen via Ray-Adaptive Depth Sets}
\label{sec:iclr-method}

\noindent\textbf{Problem statement and assumptions.}
Given one image, we seek a set of visible depths per viewing ray. Each layer records an intersected object surface's optical-axis depth in meters. The target $\mathcal T=\{t_1,\ldots,t_M\}$ contains exactly $M$ depths, with $K$ the maximum supported cardinality ($0\le M\le K$). We assume that objects through which farther layers are visible are transparent and locally modeled as finite-thickness glass. Geometry hidden behind opaque surfaces is excluded. Ray-wise targets leave cross-pixel surface identities unspecified. Qualitative out-of-distribution examples explore irregular transparent objects beyond this approximation.

\subsection{Continuous depth, discrete membership}
\label{sec:iclr-components}
Figure~\ref{fig:iclr-overview}(a) shows the prediction pipeline. The image is encoded once, and shared-weight recurrent decoder passes predict $K$ triples $(d_j,\beta_j,e_j)$: depth center, positive localization scale, and existence logit. A Bernoulli variable $R_j\sim\Bern(q_j)$ determines whether component $j$ contributes a return. It is absent when $R_j=0$ and emits one depth when $R_j=1$, with an untruncated Laplace density
\begin{equation}
q_j=\sigma(e_j),\qquad
\ell_j(t)=\frac{1}{2\beta_j}\exp\!\left(-\frac{|t-d_j|}{\beta_j}\right).
\label{eq:iclr-component}
\end{equation}
Here $\beta_j$ measures conditional localization spread, with a positive floor preventing unbounded likelihood. Densities are normalized on $\mathbb R$, although targets and decoded depths are positive. Components are independent given the image and network outputs, defining a multi-Bernoulli random set $\mathsf X$. Continuous draws are distinct almost surely, so $|\mathsf X|=\sum_jR_j$.

Optional emission lets count vary within capacity $K$. Component identities are not depth ranks: assignments link components to targets, while selection, sorting, and filtering define decoded ranks.

\subsection{ExactMB: one likelihood for depth and absence}
\label{sec:iclr-set-likelihood}
To score targets without prescribing component ranks, each one-to-one assignment, or injection, $\phi:\{1,\ldots,M\}\hookrightarrow\{1,\ldots,K\}$ matches each observed depth to a distinct predicted depth component, with whole-set density
\begin{equation}
f(\mathcal T,\phi)=
\underbrace{\prod_{i=1}^{M}q_{\phi(i)}\ell_{\phi(i)}(t_i)}_{\substack{\text{explain observed}\\\text{returns}}}
\underbrace{\prod_{j\notin\operatorname{im}\phi}(1-q_j)}_{\substack{\text{leave unused}\\\text{components absent}}}.
\label{eq:iclr-joint-assignment}
\end{equation}
As shown in Figure~\ref{fig:iclr-overview}(b), each assignment explains the targets and leaves unused components absent. ExactMB sums their densities to evaluate the multi-Bernoulli likelihood:
\begin{equation}
\boxed{f(\mathcal T)=\sum_{\phi\in\operatorname{Inj}(M,K)}f(\mathcal T,\phi),
\qquad \mathcal L(\mathcal T)=-\log f(\mathcal T).}
\label{eq:iclr-likelihood}
\end{equation}
The sum is invariant to target enumeration. Each explanation gives distinct targets distinct owners, while allowing coincident component centers. Absence factors score unused components, yielding $f(\varnothing)=\prod_j(1-q_j)$ for the empty set. Absence denotes visible-set nonmembership, not physical nonexistence. ExactMB thus jointly scores localization and membership without auxiliary penalties. Appendix~\ref{sec:normalization-details} defines the finite-set measure and proves normalization and log-score propriety.

\paragraph{Training and inference.}
We train the reference model by averaging the ExactMB loss in Equation~\eqref{eq:iclr-likelihood} over image rays. At inference, we threshold the predicted existence maps, then sort and de-duplicate the retained depths following SeeGroup's released evaluation code \citep{seegroup2026}.

\subsection{How assignment supervises depth and presence}
\label{sec:iclr-geometric-ownership}
Marginalized assignments determine how each component learns. Normalized explanation weights give the probability $P_{ji}$ that component $j$ explains target $i$. Posterior occupancy $\rho_j$ sums these responsibilities over targets:
\begin{equation}
\omega_\phi=\frac{f(\mathcal T,\phi)}{f(\mathcal T)},\qquad
P_{ji}=\sum_{\phi:\phi(i)=j}\omega_\phi,\qquad
\rho_j=\sum_iP_{ji}.
\label{eq:iclr-posterior}
\end{equation}
Here $\rho_j$ is the probability that component $j$ explains any target. Whereas $q_j$ is predicted from the image, posterior occupancy also uses observed depths and competing components. Differentiating the finite marginal likelihood gives the presence gradient
\begin{equation}
\frac{\partial\mathcal L}{\partial e_j}=q_j-\rho_j,
\label{eq:iclr-gradients}
\end{equation}
and the geometric gradients
\begin{equation}
\frac{\partial\mathcal L}{\partial d_j}=\sum_{i=1}^{M}P_{ji}
\frac{\operatorname{sign}(d_j-t_i)}{\beta_j},\qquad
\frac{\partial\mathcal L}{\partial\beta_j}=\sum_{i=1}^{M}P_{ji}
\left(\frac1{\beta_j}-\frac{|d_j-t_i|}{\beta_j^2}\right).
\label{eq:iclr-geometric-gradients}
\end{equation}
The same assignment responsibilities supervise depth and presence: each component earns presence credit by plausibly explaining an observed surface. Output derivatives use an absolute-value subgradient at zero residual. Raw-head gradients also differentiate through Softplus and scale clipping.

As shown in Figure~\ref{fig:iclr-overview}(c), presence credits sum to the annotated target count, $\sum_j\rho_j=M$. Summing the presence gradients therefore gives
\begin{equation}
\sum_j\frac{\partial\mathcal L}{\partial e_j}
=\sum_jq_j-M=\mathbb E|\mathsf X|-|\mathcal T|.
\label{eq:iclr-mass}
\end{equation}
The aggregate signal leaves component--surface assignment unresolved. Each assignment selects an occupied subset and pairs it with targets. At $M=1$, within-subset correspondence is unique, leaving only selection ambiguous. At $M=K$, the subset is fixed and every $\rho_j=1$ despite uncertain correspondence. Intermediate counts involve both decisions.

ExactMB scores observed cardinality jointly with geometry conditional on count:
\begin{equation}
\mathcal L(\mathcal T)
=\underbrace{-\log\Pr(|\mathsf X|=M)}_{\text{observed cardinality}}
+\underbrace{\bigl[-\log f(\mathcal T\mid |\mathsf X|=M)\bigr]}_{\text{geometry at that cardinality}}.
\label{eq:iclr-count-factorization}
\end{equation}
The factors share parameters: relative presence probabilities can affect conditional geometry. In contrast, for complete distinct targets with $0<M<K$, finite logits, fixed centers, and positive fixed scales, a common logit shift preserves posterior assignment probabilities. Appendix~\ref{sec:selection-curvature} proves that its unique minimum of the set loss matches expected count to $M$.

\section{Evaluating Visible Depth Layers}
\label{sec:iclr-evaluation}

Count and assignment must also be distinguished in evaluation: correct counts can hide missed surfaces and extra predictions, while correct ordering can coexist with incorrect presence. We distinguish whether a surface is recovered from how accurately its depth is estimated. We extend existing benchmarks to measure depth accuracy, recovered support, and overprediction together.

\subsection{Benchmarks and annotation conventions}
\label{sec:iclr-benchmarks}

\textbf{LayeredDepth (LD)} \citep{layereddepth2025} provides 300 real validation images (LD-Real) with sparse ordinal queries and 500 synthetic images (LD-Syn) with metric-depth channels L1/L3/L5/L7. Headline LD-Real evaluation uses valid and all-absent quadruplets that can span rays. \textbf{MultiDepth-3K (MD-3K)} complements these with 3,161 material-labeled point pairs \citep{twodepths2026}. We align its up-to-two-layer annotations with LD: L1 denotes foreground and L3 optional background. For presence evaluation, L1 is always valid, L3 only at transparent points, and unused slots are absent. Dense synthetic labels test metric depth and count, while sparse real labels test presence and ordering.

\subsection{Joint depth--presence evaluation}
\label{sec:iclr-recovery-metrics}

\textbf{Ordinal accuracy and presence.} LD-Real ACC requires all requested depths in a valid quadruplet to be present in annotated strict near-to-far order. Conversely, OverPred is the fraction of all-absent quadruplets with any requested prediction. On MD-3K, pattern ACC requires both points' validity patterns to match their targets. Recall and OverPred measure L3 emission at transparent and opaque points, respectively. Headline ordinal ACC requires correct foreground and background ordering on transparent--transparent (TT) and transparent--opaque (TO) pairs. Background order uses L3 at transparent points and L1 at opaque points, without filling missing L3. Joint ACC requires both pattern and ordinal correctness across all material-pair types.

\textbf{Metric depth and count.} On LD-Syn, decoded ranks 1--4 are paired with corresponding target channels. AbsRel averages relative depth error where target and prediction are both valid. Because omitting difficult surfaces can lower this conditional error, we also report target-support recall: the fraction of target-valid pixels with a valid prediction. Depth-accurate recall additionally requires $\max(\hat d/d,d/\hat d)<1.25$, while depth-accurate precision divides the same accurate matches by all prediction-valid pixels. Count ACC/MAE pool occupied target rays and retain supplied ties.

\textbf{Shared protocol and reporting.} Primary ablations use native selection followed by common geometric filters, without alignment or clipping. Summary AbsRel equally weights four conditional rank errors per seed. We report means and sample SD across seeds. Rank-wise scores complement fixed-ray set-recovery diagnostics. Rates and AbsRel are percentages, while count MAE is in layers.

\Needspace{6\baselineskip}
\suppressfloats[t]
\section{Experiments}
\label{sec:iclr-experiments}
\begin{table}[!t]
\centering
\setlength{\abovecaptionskip}{0pt}
\setlength{\belowcaptionskip}{4pt}
\caption{\textbf{Primary cardinality and assignment ablations.}
Separately trained variants (means$\pm$sample SD). All denotes depth stacking and Marginalized denotes ExactMB. Count reg.: explicit count regularization. Ordered: compacted annotation-channel assignment. Rows (iii--v) differ only in recorded assignment per seed.
Section~\ref{sec:iclr-evaluation} defines ACC/OverPred. AbsRel (\%) averages conditional errors over four ranks. MAE is in layers.
Markers match Figure~\ref{fig:iclr-behavior}(a).}
\label{tab:iclr-primary-ablation}
\begingroup
\fontsize{8}{9.5}\selectfont
\setlength{\tabcolsep}{1pt}
\renewcommand{\arraystretch}{1.12}
\newcommand{\PrimaryRowKey}[1]{\raisebox{0.25ex}{\hbox to 5pt{\hfil\pdfliteral{q 0.35 w 0.149020 0.137255 0.164706 RG #1 B Q}\hfil}}}
\newcommand{\PrimaryFixedKey}{\PrimaryRowKey{0.556863 0.541176 0.572549 rg -1.6 -1.6 m 1.6 -1.6 l 1.6 1.6 l -1.6 1.6 l h}}
\newcommand{\PrimaryCountKey}{\PrimaryRowKey{0.996078 0.737255 0.164706 rg 1.6 0 m 1.6 0.883656 0.883656 1.6 0 1.6 c -0.883656 1.6 -1.6 0.883656 -1.6 0 c -1.6 -0.883656 -0.883656 -1.6 0 -1.6 c 0.883656 -1.6 1.6 -0.883656 1.6 0 c h}}
\newcommand{\PrimaryMapKey}{\PrimaryRowKey{0.949020 0.411765 0.298039 rg 0 1.6 m -1.6 -1.6 l 1.6 -1.6 l h}}
\newcommand{\PrimaryMarginalKey}{\PrimaryRowKey{0.329412 0.007843 0.643137 rg 0 2.262742 m -2.262742 0 l 0 -2.262742 l 2.262742 0 l h}}
\newcommand{\PrimarySuppliedKey}{\PrimaryRowKey{0.717647 0.098039 0.584314 rg 0 -1.6 m -1.6 1.6 l 1.6 1.6 l h}}
\newcommand{\PrimaryScore}[2]{\ensuremath{#1_{\pm#2}}}
\begin{tabular*}{\linewidth}{@{\extracolsep{\fill}}lllrrrrrrrr@{}}
\toprule
& & & \multicolumn{2}{c}{\textbf{LD-Real}} & \multicolumn{3}{c}{\textbf{MD-3K}} & \multicolumn{3}{c}{\textbf{LD-Syn}} \\
\cmidrule(lr){4-5}\cmidrule(lr){6-8}\cmidrule(l){9-11}
& \textbf{Emission} & \textbf{Assignment} & \shortstack{\textbf{Ordinal}\\\textbf{ACC}$\uparrow$} & \textbf{OverPred}$\downarrow$ & \textbf{Recall}$\uparrow$ & \textbf{OverPred}$\downarrow$ & \shortstack{\textbf{Pattern}\\\textbf{ACC}$\uparrow$} & \textbf{AbsRel}$\downarrow$ & \shortstack{\textbf{Count}\\\textbf{ACC}$\uparrow$} & \textbf{MAE}$\downarrow$ \\
\midrule
\makebox[11pt][l]{(i)}\,\PrimaryFixedKey & All & Ordered & \PrimaryScore{67.4}{0.7} & \PrimaryScore{98.4}{0.3} & \PrimaryScore{100.0}{0.0} & \PrimaryScore{98.5}{0.2} & \PrimaryScore{0.03}{0.04} & \PrimaryScore{18.3}{0.1} & \PrimaryScore{3.7}{0.2} & \PrimaryScore{2.244}{0.031} \\
\makebox[11pt][l]{(ii)}\,\PrimaryCountKey & Count reg. & Ordered & \PrimaryScore{58.9}{0.9} & \PrimaryScore{7.4}{0.6} & \PrimaryScore{79.1}{1.0} & \PrimaryScore{8.6}{0.5} & \PrimaryScore{65.9}{1.2} & \PrimaryScore{17.2}{0.1} & \PrimaryScore{95.0}{0.0} & \PrimaryScore{0.067}{0.001} \\
\addlinespace[1.5pt]
\makebox[11pt][l]{(iii)}\,\PrimaryMapKey & Presence & MAP & \PrimaryScore{60.7}{1.2} & \PrimaryScore{12.1}{1.1} & \PrimaryScore{81.7}{0.7} & \PrimaryScore{20.9}{1.4} & \PrimaryScore{47.7}{2.7} & \PrimaryScore{18.7}{0.3} & \PrimaryScore{88.4}{0.2} & \PrimaryScore{0.183}{0.002} \\
\makebox[11pt][l]{(iv)}\,\PrimaryMarginalKey & Presence & Marginalized & \PrimaryScore{61.7}{1.8} & \PrimaryScore{11.6}{1.3} & \PrimaryScore{81.5}{2.9} & \PrimaryScore{19.2}{3.8} & \PrimaryScore{46.3}{4.0} & \PrimaryScore{19.2}{0.2} & \PrimaryScore{88.2}{0.5} & \PrimaryScore{0.194}{0.007} \\
\makebox[11pt][l]{(v)}\,\PrimarySuppliedKey & Presence & Ordered & \PrimaryScore{63.2}{1.9} & \PrimaryScore{10.1}{1.2} & \PrimaryScore{82.6}{3.9} & \PrimaryScore{21.7}{1.2} & \PrimaryScore{52.4}{3.1} & \PrimaryScore{17.6}{0.2} & \PrimaryScore{89.7}{0.1} & \PrimaryScore{0.154}{0.002} \\
\bottomrule
\end{tabular*}
\endgroup
\end{table}

\begin{table}[!t]
\centering
\setlength{\abovecaptionskip}{0pt}
\setlength{\belowcaptionskip}{3pt}
\caption{\textbf{Released-model transfer references.}
Single estimates from unretrained models for different tasks. Scale denotes native output.
$\dagger$~AbsRel (\%) averages four conditional rank errors after per-image/layer GT least-squares calibration and clipping; primary errors are unaligned.}
\label{tab:iclr-external}
\begingroup
\fontsize{8}{9.2}\selectfont
\setlength{\tabcolsep}{1pt}
\renewcommand{\arraystretch}{1.0}
\begin{tabular*}{\linewidth}{@{\extracolsep{\fill}}llrrrrrrrr@{}}
\toprule
 & & \multicolumn{2}{c}{\textbf{LD-Real}} & \multicolumn{3}{c}{\textbf{MD-3K}} & \multicolumn{3}{c}{\textbf{LD-Syn}} \\
\cmidrule(lr){3-4}\cmidrule(lr){5-7}\cmidrule(l){8-10}
\textbf{Released model} & \textbf{Scale} & \shortstack{\textbf{Ordinal}\\\textbf{ACC}$\uparrow$} & \textbf{OverPred}$\downarrow$ & \textbf{Recall}$\uparrow$ & \textbf{OverPred}$\downarrow$ & \shortstack{\textbf{Pattern}\\\textbf{ACC}$\uparrow$} & \shortstack{\textbf{GT-cal.}\\\textbf{AbsRel}$^{\dagger}\!\downarrow$} & \shortstack{\textbf{Count}\\\textbf{ACC}$\uparrow$} & \textbf{MAE}$\downarrow$ \\
\midrule
SeeGroup \citep{seegroup2026} & Relative & 74.2 & 100.0 & 100.0 & 100.0 & 0.0 & 15.8 & 2.4 & 2.756 \\
WT r69l \citep{worldtracing2026} & Relative & 58.2 & 100.0 & 100.0 & 100.0 & 0.0 & 24.5 & 2.4 & 2.789 \\
LaRI scenes \citep{lari2026} & Relative & 52.0 & 100.0 & 100.0 & 100.0 & 0.0 & 29.5 & 2.4 & 2.789 \\
LaRI objects \citep{lari2026} & Relative & 12.9 & 50.2 & 79.1 & 80.5 & 38.1 & 37.7 & 33.1 & 0.750 \\
WT r75b \citep{worldtracing2026} & Metric & 17.0 & 100.0 & 100.0 & 100.0 & 0.0 & 46.1 & 2.4 & 2.789 \\
\bottomrule
\end{tabular*}
\endgroup
\end{table}

We first compare ExactMB with depth stacking and released models, then examine count and assignment choices. Primary ablations and a separate matched-recipe study use four seeds per configuration, reporting means$\pm$sample SD and seed-paired differences.

\Needspace{12\baselineskip}
\paragraph{Main ablation setup.}
\label{sec:iclr-implementation}
\label{sec:iclr-predictor}
Table~\ref{tab:iclr-primary-ablation} compares:
\textbf{(i) Depth stacking} regresses supplied channels with SeeGroup's decoder, selecting all $K$ slots without absence supervision. It differs from native SeeGroup's many-to-one maximum-density coverage.
\textbf{(ii) Explicit count regularization} adds categorical count supervision, averages valid-entry depth loss, and selects the argmax-count prefix.
\textbf{(iii) MAP assignment} uses the highest-scoring injective Bernoulli--Laplace explanation.
\textbf{(iv) ExactMB} marginalizes all injective explanations without fixed assignment constraints.
\textbf{(v) Ordered assignment} fixes matches in compacted annotation-channel order, marking matched components present and others absent.
Rows (i--ii) compare complete constant-gate configurations differing in count supervision, depth-loss reduction, and selection. Rows (iii--v) share presence-gated recurrence and the remaining recorded training settings, differing only in their assignment criterion within each seed.

\paragraph{Implementation.}
All variants use 14,800 synthetic training images, $K=4$, terminal-800k checkpoints, and zero geometric auxiliary weights. Real benchmarks assess synthetic-to-real transfer. They share metric Depth Anything V2-L initialization \citep{depthanythingv2} and SeeGroup's decoder \citep{seegroup2026}. Presence-based variants select $q_j>.01$. All variants then sort finite depths above $.02$\,m and retain centers more than $.02$\,m beyond the last retained depth. These filters can reduce even the depth-stacking count. Further details appear in Appendix~\ref{sec:implementation-details}.

\subsection{Joint depth--presence modeling reduces overprediction}
\label{sec:iclr-behavior}
\begin{figure}[!t]
\centering
\setlength{\abovecaptionskip}{4pt}
\begin{minipage}[t]{.44\linewidth}
\vspace{0pt}
\raggedright\fontfamily{phv}\fontsize{8.5}{10}\selectfont
\textbf{a\quad Count accuracy}\hfill{\fontsize{8}{9}\selectfont\color{black!65}LD-Syn}\par\vspace{3pt}
\includegraphics[page=1,width=\linewidth]{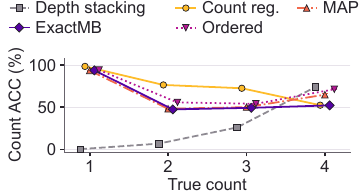}
\end{minipage}\hfill
\begin{minipage}[t]{.54\linewidth}
\vspace{0pt}
\raggedright\fontfamily{phv}\fontsize{8.5}{10}\selectfont
\textbf{b\quad Paired effects}\hfill{\fontsize{8}{9}\selectfont\color{black!65}Separate recipe}\par\vspace{3pt}
\begingroup
\fontsize{8}{10}\selectfont
\setlength{\tabcolsep}{0pt}
\renewcommand{\arraystretch}{1.05}
\newcommand{\EffectCell}[2]{#1\thinspace\textpm\thinspace#2}
\arrayrulecolor{black!25}
\begin{tabular*}{\linewidth}{@{\extracolsep{\fill}}lcc@{}}
Metric & PPP $\rightarrow$ MAP & MAP $\rightarrow$ ExactMB\\[3pt]
\midrule[.3pt]
\addlinespace[3pt]
\makecell[l]{MD pattern ACC\\\textcolor{black!65}{gain (pp)}} & \EffectCell{+14.39}{2.84} & \EffectCell{+1.96}{1.37}\\
\addlinespace[5pt]
\makecell[l]{MD OverPred\\\textcolor{black!65}{reduction (pp)}} & \EffectCell{-0.32}{1.59} & \EffectCell{+5.31}{1.49}\\
\addlinespace[5pt]
\makecell[l]{LD-Syn count MAE\\\textcolor{black!65}{reduction (layers)}} & \EffectCell{+0.1211}{0.0030} & \EffectCell{-0.0069}{0.0030}\\
\end{tabular*}
\endgroup
\end{minipage}
\caption{\textbf{Count and presence metrics favor different modeling choices.}
(a) Primary-study LD-Syn Count ACC by true count.
(b) Separate-recipe paired effects. Positive values indicate improvement.
ExactMB denotes marginalization. pp denotes percentage points. Means$\pm$sample SD.}
\label{fig:iclr-behavior}
\end{figure}

\begin{figure}[t]
\centering
\setlength{\abovecaptionskip}{4pt}
\includegraphics[page=17,width=\linewidth]{figures/figure_assets.pdf}
\caption{\textbf{Evaluating depth quality requires both localization and recovered support.}
The same five LD-Syn ablations, means$\pm$sample SD.
(a) AbsRel on jointly valid target--prediction pixels. (b) Target-support recall. (c) Depth-accurate recall. (d) Depth-accurate precision. Section~\ref{sec:iclr-evaluation} defines these metrics.
Decoded ranks 1--4 are paired with supplied target channels L1/L3/L5/L7. Principal outputs are unaligned and unclipped.
Full rank-wise scores and denominators are in Appendix~\ref{sec:primary-geometry}.}
\label{fig:iclr-primary-geometry}
\end{figure}

\textbf{Auxiliary-free ExactMB substantially reduces overprediction.}
As shown in Table~\ref{tab:iclr-primary-ablation}, ExactMB lowers LD-Real OverPred from $98.4\%$ to $11.6\%$ and MD-3K OverPred from $98.5\%$ to $19.2\%$ compared with depth stacking. It retains most LD-Real ordinal accuracy ($61.7\%$ versus $67.4\%$), with LD-Syn conditional AbsRel of $19.2\%$ versus $18.3\%$. Joint depth--presence learning thus suppresses false emissions with similar conditional depth accuracy, without auxiliary count or geometric losses.

\textbf{Visible-set membership matters even with learned stopping.}
The released-model results in Table~\ref{tab:iclr-external} show why optional emission need not recover visible membership. LaRI objects allows absence via its ray-stop mask, yet LD-Real and MD-3K OverPred are $50.2\%$ and $80.5\%$, versus ExactMB's $11.6\%$ and $19.2\%$. MD recall is comparable ($79.1\%$ versus $81.5\%$). These transfers complement the shared-training comparison: released models target different tasks and retain native adapters.

\textbf{Count regularization and ordered assignment further shape recovery.}
The count-regularized configuration exceeds ExactMB on MD pattern ACC ($65.9\%$ versus $46.3\%$) and synthetic count ACC ($95.0\%$ versus $88.2\%$). One-return rays dominate the latter: an always-one predictor achieves $87.49\%$ micro count ACC and $.2112$-layer MAE on occupied LD-Syn rays. Figure~\ref{fig:iclr-behavior}(a) stratifies the results by cardinality. Count regularization leads on one- through three-return rays, while ordered assignment reaches $71.4\%$ on four-return rays versus ExactMB's $52.3\%$. Stratified scores therefore test multilayer counting beyond performance on the dominant one-return case.

\textbf{Assignment marginalization shapes recovery trade-offs.}
Figure~\ref{fig:iclr-behavior}(b) compares a Poisson point-process (PPP) reference, MAP, and ExactMB under a separate recipe. MAP$\rightarrow$ExactMB lowers MD OverPred by $5.31\pm1.49$ points and raises pattern ACC by $1.96\pm1.37$, while count MAE worsens by $.0069\pm.0030$ layers. The primary-study pattern-ACC change is $-1.35\pm6.69$ points: marginalization's benefits depend on the recipe and evaluation metric.

\subsection{Depth accuracy must be read with recovered support}
\label{sec:iclr-set-quality}
\suppressfloats[t]
\begin{figure}[t]
\centering
\setlength{\abovecaptionskip}{4pt}
\includegraphics[page=16,width=\linewidth]{figures/figure_assets.pdf}
\caption{\textbf{Correct ordering does not guarantee correct visible-surface membership.}
All 3,161 MD-3K pairs; principal operating point. Joint ACC requires correct presence and ordering against the annotations. Bars/whiskers: four-seed means/sample SD; circles: per-seed Joint ACC.}
\label{fig:iclr-set-quality}
\end{figure}

\begin{figure}[!t]
\centering
\setlength{\abovecaptionskip}{4pt}
\includegraphics[page=30,width=\linewidth]{figures/figure_assets.pdf}
\caption{\textbf{Visible depth layers with spatially varying support.}
Selected auxiliary-regularized ExactMB predictions. Columns: RGB, count, front-to-back depth ranks 1--4.
(a--h) Real predictions. (i) Synthetic GT/prediction on a shared metric scale, without alignment.
Gray: absent or invalid depth. The supplementary \textbf{video demo} provides additional qualitative cases and clearer views.}
\label{fig:iclr-layered-depth}
\end{figure}

\textbf{Geometric evaluation distinguishes selective accuracy from recovery.}
Which surfaces remain accurately recovered at these lower emission rates? Figure~\ref{fig:iclr-primary-geometry} compares four ranks across the same 20 checkpoints and 500 LD-Syn images. Relative to depth stacking, ExactMB raises second-rank depth-accurate precision from $10.07\%$ to $54.04\%$, while recall falls from $80.20\%$ to $75.78\%$. The count-regularized configuration lowers conditional errors relative to marginalization, but second- and third-rank target-support recall falls by $6.71\pm.26$ and $10.68\pm.28$ points. Lower conditional error can therefore accompany selective omission rather than more complete recovery.

\textbf{Ordered assignment improves deeper geometry and support.}
Relative to ExactMB, ordered assignment lowers conditional AbsRel and increases target-support recall at ranks 2--4 in every seed. Fourth-rank depth-accurate recall and precision improve by $13.95\pm.58$ and $8.76\pm.59$ points. These joint gains indicate more accurate recovery, not just more emitted depths. In contrast, count regularization attains higher deeper-rank precision than ordered assignment at lower recall, though it also exceeds marginalization in fourth-rank recall and precision. Ordered assignment's geometric gains accompany an increase in MD OverPred from $19.23\%$ to $21.65\%$. These configurations yield different precision--recall operating points for deeper-layer recovery.

\textbf{Correct ordering can coexist with incorrect membership.}
Figure~\ref{fig:iclr-set-quality} reports ordinal results on all 3,161 MD pairs. Depth stacking obtains $85.13\%$ ordinal ACC but $0\%$ Joint ACC. ExactMB, ordered assignment, and the count-regularized configuration attain $42.34\%$, $46.62\%$, and $58.79\%$ Joint ACC, respectively. As detailed in Appendix~\ref{sec:exp-joint-errors}, ordered assignment's gain over ExactMB reflects better presence-pattern recovery despite lower ordering accuracy conditional on correct presence. Joint ACC measures annotated presence and ordering together, not metric-set recovery.

\Needspace{6\baselineskip}
\subsection{Multilayer 3D reconstruction and applications}
\label{sec:iclr-qualitative-evidence}
\begin{figure}[!t]
\centering
\setlength{\abovecaptionskip}{4pt}
\includegraphics[page=22,width=\linewidth]{figures/figure_assets.pdf}
\caption{\textbf{Layered 3D reconstruction for relighting.}
(a) Textured and depth-colored unprojections (purple near, yellow far, per-scene scales).
(b) Inserted-light re-rendering and detail crops.}
\label{fig:iclr-showcase}
\end{figure}

\begin{figure}[!t]
\centering
\setlength{\abovecaptionskip}{4pt}
\includegraphics[page=31,width=\linewidth]{figures/figure_assets.pdf}
\caption{\textbf{Local route screening with reconstructed surfaces.}
(a) Monocular input. (b) Layered reconstructions.
(c) MicroDuck routes \citep{microduck2026} clear or intersect selected first-layer facets (green/red).
(d) Rejected (red) and feasible (green) rollouts.}
\label{fig:iclr-navigation}
\end{figure}

ExactMB's retained depth sets support layered 3D reconstruction. Figure~\ref{fig:iclr-layered-depth} shows ExactMB depth and spatial support, including out-of-distribution MD-3K and LD-Real examples after synthetic-only task training. Unprojecting only retained depths carries ray-adaptive membership into 3D: foreground and background surfaces can coexist without turning unused components into additional mesh layers. Figure~\ref{fig:iclr-showcase} shows the resulting RGB-textured meshes rendered with inserted lights and assigned thin-glass materials. These pinhole reconstructions offer a starting point for calibrated refractive ray models~\citep{agrawal2012refractive}. Beyond rendering, Figure~\ref{fig:iclr-navigation} shows candidate robot trajectories screened against selected first-layer surfaces of the reconstructed partial scene. Red and green indicate rejected and locally feasible simulated rollouts. Extending this local geometric test to closed-loop control and whole-scene safety verification is a promising direction for future work.

\Needspace{8\baselineskip}
\section{Conclusion}
\label{sec:iclr-conclusion}

Depth Any Seen learns which visible surfaces are present and where they lie on a shared metric scale through ExactMB's joint depth--presence likelihood. Our analysis distinguishes expected-count correction from component--surface assignment. Experiments show that conditional depth accuracy and correct ordering can conceal incomplete or excessive predictions, motivating joint evaluation of localization, recovered support, and overprediction. The central goal is to recover the right surfaces at the right depths, rather than merely produce additional depth maps.

\paragraph{Limitations and future work.}
Our synthetic-only task training leaves room for stronger real-world generalization through mixed synthetic--real supervision. We believe that extending our image-based layered 3D prediction to video and streaming reconstruction would offer promising next steps toward temporally consistent recovery \citep{depthcrafter2025,streamvggt2026,lingbotmap2026}. We hope this work sheds light on joint depth--presence learning for recovering visible 3D structure.

\FloatBarrier
\label{iclr:main-text-end}
\clearpage
\subsection*{AI use statement}
Generative AI assistants (Codex) supported manuscript preparation, grammar checking, language refinement, consistency checks, and code for analysis and figure layout. No reported experimental results or plotted values were synthesized by generative AI assistants.

\subsection*{Ethics statement}
For our quantitative benchmarking, we adapt existing datasets to evaluate visible multilayer depth and presence, without collecting new data. Our method aims to improve the reliability of multilayer 3D representations and support safer downstream tasks, including robot navigation. Future work should validate real-world deployment safety and downstream control.

\subsection*{Reproducibility statement}
The appendix details the formulation, implementation, evaluation protocols, and study populations. Primary comparisons use multi-seed experiments, with single-seed analyses identified separately. Training histories and experiment records are archived in Weights \& Biases (W\&B). We will publicly release all our code and model checkpoints with the camera-ready paper to support reproducibility.

\bibliographystyle{iclr2027_conference}
\bibliography{bib/main}

@inproceedings{mvsnet2018,
  title={Mvsnet: Depth inference for unstructured multi-view stereo},
  author={Yao, Yao and Luo, Zixin and Li, Shiwei and Fang, Tian and Quan, Long},
  booktitle={European conference on computer vision},
  pages={785--801},
  year={2018},
  organization={Springer}
}

@inproceedings{seegroup2026,
  title         = {{SeeGroup}: Multi-Layer Depth Estimation of Transparent Surfaces via Self-Determined Grouping},
  author        = {Wen, Hongyu and Deng, Jia},
  booktitle     = {Proceedings of the IEEE/CVF Conference on Computer Vision and Pattern Recognition},
  year          = {2026},
}

@inproceedings{layereddepth2025,
  title={Seeing and seeing through the glass: Real and synthetic data for multi-layer depth estimation},
  author={Wen, Hongyu and Zuo, Yiming and Subramanian, Venkat and Chen, Patrick and Deng, Jia},
  booktitle={2025 IEEE/CVF International Conference on Computer Vision (ICCV)},
  pages={6715--6725},
  year={2025},
  organization={IEEE}
}

@inproceedings{depthfocus2026,
  title={DepthFocus: Controllable depth estimation for see-through scenes},
  author={Min, Junhong and Kim, Jimin and Kim, Minwook and Min, Cheol-Hui and Jeon, Youngpil and Choi, Minyong},
  booktitle={Proceedings of the IEEE/CVF Conference on Computer Vision and Pattern Recognition},
  pages={12595--12605},
  year={2026}
}

@inproceedings{seeclear2026,
  title={SeeClear: Reliable transparent object depth estimation via generative opacification},
  author={Wang, Xiaoying and He, Yumeng and Shi, Jingkai and Lu, Jiayin and Yang, Yin and Jiang, Ying and Jiang, Chenfanfu},
  booktitle={European Conference on Computer Vision},
  pages={75--93},
  year={2026},
  organization={Springer}
}

@inproceedings{twodepths2026,
  title         = {One Scene, Two Depths: Probing Geometric Ambiguity in Monocular Foundation Models},
  author        = {Xu, Xiaohao and Xue, Feng and Li, Xiang and Li, Haowei and Yang, Shusheng and Zhang, Tianyi and Johnson-Roberson, Matthew and Huang, Xiaonan},
  booktitle     = {European Conference on Computer Vision},
  year          = {2026},
  organization={Springer}
}

@inproceedings{lari2026,
  title         = {{LaRI}: Layered Ray Intersections for Single-view {3D} Geometric Reasoning},
  author        = {Li, Rui and Zhang, Biao and Li, Zhenyu and Tombari, Federico and Wonka, Peter},
  booktitle     = {Proceedings of the 43rd International Conference on Machine Learning},
  series        = {Proceedings of Machine Learning Research},
  publisher     = {PMLR},
  year          = {2026},
}

@article{worldtracing2026,
  title={World Tracing: Generative Pixel-Aligned Geometry Beyond the Visible},
  author={Zhang, Hao and Banani, Mohamed El and Cheng, Jen-Hao and Zhang, Paul and Hua, Yi and Mildenhall, Ben and Lassner, Christoph and Ahuja, Narendra and Yang, Gengshan},
  journal={arXiv preprint arXiv:2606.13652},
  year={2026}
}

@article{bian2026ambiguity,
  title={Modeling Depth Ambiguity: A Mixture-Density Representation for Flying-Point-Free Depth Estimation},
  author={Bian, Siyuan and Xu, Congrong and Gao, Jun},
  journal={arXiv preprint arXiv:2606.02552},
  year={2026}
}

@inproceedings{hess2022object,
  title={Object detection as probabilistic set prediction},
  author={Hess, Georg and Petersson, Christoffer and Svensson, Lennart},
  booktitle={European Conference on Computer Vision},
  pages={550--566},
  year={2022},
  organization={Springer}
}

@article{depthanythingv2,
  title={Depth anything v2},
  author={Yang, Lihe and Kang, Bingyi and Huang, Zilong and Zhao, Zhen and Xu, Xiaogang and Feng, Jiashi and Zhao, Hengshuang},
  journal={Advances in neural information processing systems},
  volume={37},
  pages={21875--21911},
  year={2024}
}

@inproceedings{depthcrafter2025,
  title={Depthcrafter: Generating consistent long depth sequences for open-world videos},
  author={Hu, Wenbo and Gao, Xiangjun and Li, Xiaoyu and Zhao, Sijie and Cun, Xiaodong and Zhang, Yong and Quan, Long and Shan, Ying},
  booktitle={2025 IEEE/CVF Conference on Computer Vision and Pattern Recognition (CVPR)},
  pages={2005--2015},
  year={2025},
  organization={IEEE}
}

@inproceedings{streamvggt2026,
  title={Streaming visual geometry transformer},
  author={Zhuo, Dong and Zheng, Wenzhao and Guo, Jiahe and Wu, Yuqi and Zhou, Jie and Lu, Jiwen},
  booktitle={International Conference on Learning Representations},
  volume={2026},
  pages={88055--88072},
  year={2026}
}

@inproceedings{lingbotmap2026,
  title={Geometric context transformer for streaming 3d reconstruction},
  author={Chen, Lin-Zhuo and Gao, Jian and Chen, Yihang and Cheng, Ka Leong and Sun, Yipengjing and Hu, Liangxiao and Xue, Nan and Zhu, Xing and Shen, Yujun and Yao, Yao and others},
  booktitle     = {European Conference on Computer Vision},
  year          = {2026}
}

@inproceedings{vggt2025,
  title={Vggt: Visual geometry grounded transformer},
  author={Wang, Jianyuan and Chen, Minghao and Karaev, Nikita and Vedaldi, Andrea and Rupprecht, Christian and Novotny, David},
  booktitle={2025 IEEE/CVF Conference on Computer Vision and Pattern Recognition (CVPR)},
  pages={5294--5306},
  year={2025},
  organization={IEEE}
}

@inproceedings{pi32026,
  title={{$\pi^3$}: Permutation-Equivariant Visual Geometry Learning},
  author={Wang, Yifan and Zhou, Jianjun and Zhu, Haoyi and Chang, Wenzheng and Zhou, Yang and Li, Zizun and Chen, Junyi and Pang, Jiangmiao and Shen, Chunhua and He, Tong},
  booktitle={International Conference on Learning Representations},
  volume={2026},
  pages={10481--10497},
  year={2026}
}

@inproceedings{dust3r2024,
  title={Dust3r: Geometric 3d vision made easy},
  author={Wang, Shuzhe and Leroy, Vincent and Cabon, Yohann and Chidlovskii, Boris and Revaud, Jerome},
  booktitle={2024 IEEE/CVF Conference on Computer Vision and Pattern Recognition (CVPR)},
  pages={20697--20709},
  year={2024},
  organization={IEEE}
}

@inproceedings{mast3r2024,
  title={Grounding image matching in 3d with mast3r},
  author={Leroy, Vincent and Cabon, Yohann and Revaud, J{\'e}r{\^o}me},
  booktitle={European conference on computer vision},
  pages={71--91},
  year={2024},
  organization={Springer}
}

@inproceedings{mapanything2026,
  title={Mapanything: Universal feed-forward metric 3d reconstruction},
  author={Keetha, Nikhil and M{\"u}ller, Norman and Sch{\"o}nberger, Johannes and Porzi, Lorenzo and Zhang, Yuchen and Fischer, Tobias and Knapitsch, Arno and Zauss, Duncan and Weber, Ethan and Antunes, Nelson and others},
  booktitle={2026 International Conference on 3D Vision (3DV)},
  pages={499--509},
  year={2026},
  organization={IEEE}
}

@inproceedings{trellis2025,
  title={Structured 3d latents for scalable and versatile 3d generation},
  author={Xiang, Jianfeng and Lv, Zelong and Xu, Sicheng and Deng, Yu and Wang, Ruicheng and Zhang, Bowen and Chen, Dong and Tong, Xin and Yang, Jiaolong},
  booktitle={2025 IEEE/CVF Conference on Computer Vision and Pattern Recognition (CVPR)},
  pages={21469--21480},
  year={2025},
  organization={IEEE}
}

@inproceedings{trellis22026,
  title={Native and compact structured latents for 3d generation},
  author={Xiang, Jianfeng and Chen, Xiaoxue and Xu, Sicheng and Wang, Ruicheng and Lv, Zelong and Deng, Yu and Zhu, Hongyuan and Dong, Yue and Zhao, Hao and Yuan, Nicholas Jing and others},
  booktitle={Proceedings of the IEEE/CVF Conference on Computer Vision and Pattern Recognition},
  pages={14419--14429},
  year={2026}
}

@inproceedings{sam3d2026,
  title={Sam 3d: 3dfy anything in images},
  author={Chen, Xingyu and Chu, Fu-Jen and Gleize, Pierre and Liang, Kevin J and Sax, Alexander and Tang, Hao and Wang, Weiyao and Guo, Michelle and Hardin, Thibaut and Li, Xiang and others},
  booktitle={Proceedings of the IEEE/CVF Conference on Computer Vision and Pattern Recognition},
  pages={7220--7232},
  year={2026}
}

@inproceedings{depthanything,
  title={Depth anything: Unleashing the power of large-scale unlabeled data},
  author={Yang, Lihe and Kang, Bingyi and Huang, Zilong and Xu, Xiaogang and Feng, Jiashi and Zhao, Hengshuang},
  booktitle={2024 IEEE/CVF Conference on Computer Vision and Pattern Recognition (CVPR)},
  pages={10371--10381},
  year={2024},
  organization={IEEE}
}

@inproceedings{ranftl2021,
  title     = {Vision Transformers for Dense Prediction},
  author    = {Ranftl, Ren{\'e} and Bochkovskiy, Alexey and Koltun, Vladlen},
  booktitle = {IEEE/CVF International Conference on Computer Vision (ICCV)},
  year      = {2021}
}

@article{dino,
title={{DINO}v2: Learning Robust Visual Features without Supervision},
author={Maxime Oquab and Timoth{\'e}e Darcet and Th{\'e}o Moutakanni and Huy V. Vo and Marc Szafraniec and Vasil Khalidov and Pierre Fernandez and Daniel HAZIZA and Francisco Massa and Alaaeldin El-Nouby and Mido Assran and Nicolas Ballas and Wojciech Galuba and Russell Howes and Po-Yao Huang and Shang-Wen Li and Ishan Misra and Michael Rabbat and Vasu Sharma and Gabriel Synnaeve and Hu Xu and Herve Jegou and Julien Mairal and Patrick Labatut and Armand Joulin and Piotr Bojanowski},
journal={Transactions on Machine Learning Research},
issn={2835-8856},
year={2024},
url={https://openreview.net/forum?id=a68SUt6zFt},
note={Featured Certification}
}

@INPROCEEDINGS{sajjan2020,
  author={Sajjan, Shreeyak and Moore, Matthew and Pan, Mike and Nagaraja, Ganesh and Lee, Johnny and Zeng, Andy and Song, Shuran},
  booktitle={2020 IEEE International Conference on Robotics and Automation (ICRA)}, 
  title={Clear Grasp: 3D Shape Estimation of Transparent Objects for Manipulation}, 
  year={2020},
  volume={},
  number={},
  pages={3634-3642},
  doi={10.1109/ICRA40945.2020.9197518}}

@inproceedings{shade1998,
author = {Shade, Jonathan and Gortler, Steven and He, Li-wei and Szeliski, Richard},
title = {Layered depth images},
year = {1998},
isbn = {0897919998},
publisher = {Association for Computing Machinery},
address = {New York, NY, USA},
url = {https://doi.org/10.1145/280814.280882},
doi = {10.1145/280814.280882},
booktitle = {Proceedings of the 25th Annual Conference on Computer Graphics and Interactive Techniques},
pages = {231–242},
numpages = {12},
series = {SIGGRAPH '98}
}

@article{gneiting2007proper,
author = {Tilmann Gneiting and Adrian E Raftery},
title = {Strictly Proper Scoring Rules, Prediction, and Estimation},
journal = {Journal of the American Statistical Association},
volume = {102},
number = {477},
pages = {359--378},
year = {2007},
publisher = {Taylor \& Francis},
doi = {10.1198/016214506000001437},


URL = { 
    
        https://doi.org/10.1198/016214506000001437
    
    

},
eprint = { 
    
        https://doi.org/10.1198/016214506000001437
    
    

}

}

@article{wainwright2008graphical,
  author = {Wainwright, Martin J. and Jordan, Michael I.},
  title = {Graphical Models, Exponential Families, and Variational Inference},
  journal = {Foundations and Trends in Machine Learning},
  volume = {1},
  number = {1--2},
  pages = {1--305},
  year = {2008},
  doi = {10.1561/2200000001}
}

@INPROCEEDINGS{shin2019scene,
  author={Shin, Daeyun and Ren, Zhile and Sudderth, Erik and Fowlkes, Charless},
  booktitle={2019 IEEE/CVF International Conference on Computer Vision (ICCV)}, 
  title={3D Scene Reconstruction With Multi-Layer Depth and Epipolar Transformers}, 
  year={2019},
  volume={},
  number={},
  pages={2172-2182},
  doi={10.1109/ICCV.2019.00226}}

@INPROCEEDINGS{dhamo2019object,
  author={Dhamo, Helisa and Navab, Nassir and Tombari, Federico},
  booktitle={2019 IEEE/CVF International Conference on Computer Vision (ICCV)}, 
  title={Object-Driven Multi-Layer Scene Decomposition From a Single Image}, 
  year={2019},
  volume={},
  number={},
  pages={5368-5377},
  doi={10.1109/ICCV.2019.00547}}

@INPROCEEDINGS{costanzino2023depth4tom,
  author={Costanzino, Alex and Ramirez, Pierluigi Zama and Poggi, Matteo and Tosi, Fabio and Mattoccia, Stefano and Di Stefano, Luigi},
  booktitle={2023 IEEE/CVF International Conference on Computer Vision (ICCV)}, 
  title={Learning Depth Estimation for Transparent and Mirror Surfaces}, 
  year={2023},
  volume={},
  number={},
  pages={9210-9221},
  doi={10.1109/ICCV51070.2023.00848}}

@INPROCEEDINGS{liu2025modest,
  author={Liu, Jiangyuan and Ma, Hongxuan and Guo, Yuxin and Zhao, Yuhao and Zhang, Chi and Sui, Wei and Zou, Wei},
  booktitle={2025 IEEE International Conference on Robotics and Automation (ICRA)}, 
  title={Monocular Depth Estimation and Segmentation for Transparent Object with Iterative Semantic and Geometric Fusion}, 
  year={2025},
  volume={},
  number={},
  pages={11162-11168},
  doi={10.1109/ICRA55743.2025.11128401}}

@INPROCEEDINGS{agrawal2012refractive,
  author={Agrawal, Amit and Ramalingam, Srikumar and Taguchi, Yuichi and Chari, Visesh},
  booktitle={2012 IEEE Conference on Computer Vision and Pattern Recognition}, 
  title={A theory of multi-layer flat refractive geometry}, 
  year={2012},
  volume={},
  number={},
  pages={3346-3353},
  doi={10.1109/CVPR.2012.6248073}}

@misc{microduck2026,
  author       = {{Pollen Robotics}},
  title        = {{Microduck Sandbox}},
  year         = {2026},
  howpublished = {Hugging Face Space},
  url          = {https://huggingface.co/spaces/pollen-robotics/microduck-simulator},
  note         = {Robot model and simulator assets, revision e81974b}
}

@inproceedings{tulsiani2018layer,
author = {Tulsiani, Shubham and Tucker, Richard and Snavely, Noah},
title = {Layer-Structured 3D Scene Inference via View Synthesis},
year = {2018},
isbn = {978-3-030-01233-5},
publisher = {Springer-Verlag},
address = {Berlin, Heidelberg},
url = {https://doi.org/10.1007/978-3-030-01234-2_19},
doi = {10.1007/978-3-030-01234-2_19},
booktitle = {Computer Vision – ECCV 2018: 15th European Conference, Munich, Germany, September 8–14, 2018, Proceedings, Part VII},
pages = {311–327},
numpages = {17},
location = {Munich, Germany}
}

@INPROCEEDINGS{shih2020photography,
  author={Shih, Meng-Li and Su, Shih-Yang and Kopf, Johannes and Huang, Jia-Bin},
  booktitle={2020 IEEE/CVF Conference on Computer Vision and Pattern Recognition (CVPR)}, 
  title={3D Photography Using Context-Aware Layered Depth Inpainting}, 
  year={2020},
  volume={},
  number={},
  pages={8025-8035},
  doi={10.1109/CVPR42600.2020.00805}}

@article{engel2024decomposition,
author = {Engel, Dominik and Hartwig, Sebastian and Ropinski, Timo},
title = {Monocular Depth Decomposition of Semi-Transparent Volume Renderings},
year = {2024},
issue_date = {July 2024},
publisher = {IEEE Educational Activities Department},
address = {USA},
volume = {30},
number = {7},
issn = {1077-2626},
url = {https://doi.org/10.1109/TVCG.2023.3245305},
doi = {10.1109/TVCG.2023.3245305},
journal = {IEEE Transactions on Visualization and Computer Graphics},
month = jul,
pages = {3981–3994},
numpages = {14}
}

@INPROCEEDINGS{zhu2021implicit,
  author={Zhu, Luyang and Mousavian, Arsalan and Xiang, Yu and Mazhar, Hammad and Eenbergen, Jozef van and Debnath, Shoubhik and Fox, Dieter},
  booktitle={2021 IEEE/CVF Conference on Computer Vision and Pattern Recognition (CVPR)}, 
  title={RGB-D Local Implicit Function for Depth Completion of Transparent Objects}, 
  year={2021},
  volume={},
  number={},
  pages={4647-4656},
  doi={10.1109/CVPR46437.2021.00462}}

@ARTICLE{fang2022transcg,
  author={Fang, Hongjie and Fang, Hao-Shu and Xu, Sheng and Lu, Cewu},
  journal={IEEE Robotics and Automation Letters}, 
  title={TransCG: A Large-Scale Real-World Dataset for Transparent Object Depth Completion and a Grasping Baseline}, 
  year={2022},
  volume={7},
  number={3},
  pages={7383-7390},
  doi={10.1109/LRA.2022.3183256}}

@INPROCEEDINGS{shi2024asgrasp,
  author={Shi, Jun and A, Yong and Jin, Yixiang and Li, Dingzhe and Niu, Haoyu and Jin, Zhezhu and Wang, He},
  booktitle={2024 IEEE International Conference on Robotics and Automation (ICRA)}, 
  title={ASGrasp: Generalizable Transparent Object Reconstruction and 6-DoF Grasp Detection from RGB-D Active Stereo Camera}, 
  year={2024},
  volume={},
  number={},
  pages={5441-5447},
  doi={10.1109/ICRA57147.2024.10611152}}
\clearpage
\appendix
\setcounter{figure}{0}
\setcounter{table}{0}
\setcounter{equation}{0}
\setcounter{proposition}{0}
\setcounter{theorem}{0}
\setcounter{remark}{0}
\renewcommand{\thefigure}{S\arabic{figure}}
\renewcommand{\thetable}{S\arabic{table}}
\renewcommand{\theequation}{S\arabic{equation}}
\renewcommand{\theproposition}{S\arabic{proposition}}
\renewcommand{\thetheorem}{S\arabic{theorem}}
\renewcommand{\theremark}{S\arabic{remark}}
\renewcommand{\theHfigure}{supp.\arabic{figure}}
\renewcommand{\theHtable}{supp.\arabic{table}}
\renewcommand{\theHequation}{supp.\arabic{equation}}
\renewcommand{\theHproposition}{supp.\arabic{proposition}}
\renewcommand{\theHtheorem}{supp.\arabic{theorem}}
\renewcommand{\theHremark}{supp.\arabic{remark}}
\suppressfloats[t]
\section*{Appendix contents}
\label{iclr:appendix-start}

The appendix connects probabilistic analysis, evaluation protocols, and recovery behavior.
Appendices~\ref{sec:selection-correspondence}--\ref{sec:selection-curvature} explain how count and assignment interact
through selection, correspondence, and specialization.
The evaluation definitions and study provenance in
Appendices~\ref{sec:evaluation-details}--\ref{sec:ablation-factors} support the primary ablations
and geometric recovery results in Appendices~\ref{sec:exp-behavior}--\ref{sec:primary-geometry}.
Appendix~\ref{sec:primary-geometry-operating-points} relates geometric operating points to retained support,
while Appendices~\ref{sec:exp-optimization}--\ref{sec:saved-ray-dynamics} trace count, activation,
uncertainty, and geometry during training. Appendix figures, tables, and equations use an S prefix.

\begingroup
\small
\newcommand{\AppIndexSection}[2]{%
  \par\addvspace{.3\baselineskip}\noindent
  \hyperref[#1]{\textbf{\ref*{#1}\quad #2}}\nobreak\hfill
  \hyperref[#1]{\textbf{\pageref*{#1}}}\par}
\newcommand{\AppIndexEntry}[2]{%
  \noindent\hspace*{1.4em}\hyperref[#1]{\makebox[2.8em][l]{\ref*{#1}}#2}\nobreak\dotfill
  \hyperref[#1]{\pageref*{#1}}\par}

\AppIndexSection{sec:formulation}{Model and probabilistic foundations}
\AppIndexEntry{sec:theory-checks}{Distributional and objective details}
\AppIndexEntry{sec:selection-correspondence}{Selecting components versus assigning surfaces}
\AppIndexEntry{sec:selection-curvature}{Mass correction and specialization are different directions}

\AppIndexSection{sec:reproducibility}{Implementation, evaluation, and reproducibility}
\AppIndexEntry{sec:implementation-details}{Reference implementation and training recipe}
\AppIndexEntry{sec:evaluation-details}{Evaluation definitions and adapters}
\AppIndexEntry{sec:ablation-factors}{Ablation factors and configuration provenance}
\AppIndexEntry{sec:synthetic-interface-audit}{Annotation semantics and the training interface}

\AppIndexSection{sec:experiments}{Additional experimental results}
\AppIndexEntry{sec:exp-behavior}{Primary ablations and released-model transfer}
\AppIndexEntry{sec:primary-geometry}{Primary four-seed geometric recovery}
\AppIndexEntry{sec:exp-error-structure}{Error structure across cardinalities and relations}
\AppIndexEntry{sec:exp-modeling}{Point-process and assignment controls}
\AppIndexEntry{sec:exp-exactmb-geometry}{Matched geometric-loss interventions}
\AppIndexEntry{sec:exp-architecture}{Encoder capacity and decoder design}
\AppIndexEntry{sec:exp-practical-boundaries}{Archived decoder operating points}
\AppIndexEntry{sec:exp-real-qualitative}{Additional real-world qualitative results}

\AppIndexSection{sec:iclr-count-geometry-diagnostics}{Learning dynamics and geometric recovery}
\AppIndexEntry{sec:exp-optimization}{Count, activation, and held-out learning}
\AppIndexEntry{sec:saved-ray-dynamics}{Fixed-ray uncertainty and geometric recovery}
\AppIndexEntry{sec:extended-analyses}{Cardinality weighting and retained support}

\AppIndexSection{sec:representation-details}{Output and membership conventions}
\endgroup
\clearpage

\begingroup
\newcommand{\AppendixTableSetup}{%
  \setlength{\abovecaptionskip}{0pt}%
  \setlength{\belowcaptionskip}{4pt}}
\newcommand{\AppendixTableBase}{%
  \color{black}\arrayrulecolor{black}%
  \setlength{\tabcolsep}{3pt}%
  \renewcommand{\arraystretch}{1.10}%
  \setlength{\heavyrulewidth}{.72pt}%
  \setlength{\lightrulewidth}{.42pt}%
  \setlength{\cmidrulewidth}{.42pt}}
\newcommand{\AppendixTextTableStyle}{%
  \AppendixTableBase\fontsize{9}{11}\selectfont}
\newcommand{\AppendixNumericTableStyle}{%
  \AppendixTableBase\fontsize{8.5}{10.2}\selectfont}
\newcommand{\AppendixTableNote}[1]{%
  \par\smallskip\noindent
  {\fontsize{8.5}{10.2}\selectfont\color{black}\raggedright #1\par}}
\definecolor{appendixRankBand}{gray}{.96}

\makeatletter
\setlength{\@fptop}{0pt}
\makeatother
\section{Model and Probabilistic Foundations}
\suppressfloats[t]
\label{sec:formulation}
\label{sec:ls-updated-method}

\subsection{Distributional and Objective Details}
\label{sec:method-theory-details}
\label{sec:theory-checks}

\subsubsection{What the Coverage Objective Encourages}
\label{sec:coverage-diagnosis-main}

For a nonempty target, the maximum-density coverage terms associated with SeeGroup~\citep{seegroup2026} take the form
\begin{align}
    \Ltarget
    &= -\sum_{i=1}^{M}\log\max_{j\in[K]}\ell_j(t_i),
    \label{eq:target-max}\\
    \Lcover
    &= -\sum_{j=1}^{K}\log\max_{i\in[M]}\ell_j(t_i),
    \label{eq:component-max}\\
    \Lbi
    &=\lambda_t\Ltarget+\lambda_c\Lcover,
    \quad \lambda_t,\lambda_c>0.
    \label{eq:bidirectional-coverage}
\end{align}
The target-side term permits targets to share components; the component-side term encourages every component to explain a target. With distinct targets, $1\leq M\leq K$, freely optimized ray-wise centers and scales, and $\beta_j\geq\beta_{\min}>0$, each density is bounded by $(2\beta_{\min})^{-1}$. Both losses reach their lower bounds by covering every target and centering every component on a target at the scale floor. Positive weights require every global minimum to attain both bounds, so surplus components duplicate target depths. Shared parameters, regularization, finite training, and filtering can alter this idealized behavior. Our depth-stacking baseline instead uses ordered depth regression.

\subsubsection{A Poisson Intensity Defines a Different Set Model}

Assignment constraints also distinguish ExactMB from Poisson set models. Intensity specifies only a first-order moment; a Poisson point-process assumption with finite total intensity gives the normalized density
\begin{equation}
 f_{\mathrm{PPP}}(\mathcal T)
 =\exp(-\Lambda)\prod_{t\in\mathcal T}v(t),
 \qquad \Lambda=\int v(t)\,\dd t.
 \label{eq:ppp-density}
\end{equation}
For $v(t)=\sum_j\lambda_j\ell_j(t)$, $\lambda_j\geq0$, expansion sums over all target-to-component assignments, allowing each component to explain multiple targets. This law has unbounded count and empty-event probability $\exp(-\Lambda)$. ExactMB bounds count through optional singletons and injective assignments.

\subsubsection{Reference Measure, Normalization, and Propriety}
\label{sec:normalization-details}

For ExactMB, let $\mathfrak F_{\leq K}(\DepthSpace)$ denote simple finite subsets of $\DepthSpace=\R$ of cardinality at most $K$. This domain accommodates untruncated densities despite positive targets and decoded depths. For symmetric nonnegative measurable or integrable $g$, define
\begin{equation}
 \int g(\mathcal T)\,\delta\mathcal T
 =g(\nullsym)+\sum_{M=1}^{K}\frac{1}{M!}
 \int_{\DepthSpace^M}g(\{t_1,\ldots,t_M\})\,\dd t_{1:M}.
 \label{eq:fisst-integral}
\end{equation}
The factorial removes repeated set enumerations. Under the fixed Lebesgue measure in meters, $M$-element densities have units $\mathrm m^{-M}$. Holding existing components fixed, appending a component with $q=0$ leaves the law unchanged; sigmoid logits approach this boundary as $e\to-\infty$.

\begin{proposition}[Multi-Bernoulli normalization]
\label{prop:normalization}
Under~\eqref{eq:fisst-integral}, density~\eqref{eq:iclr-likelihood} satisfies $\int f(\mathcal T)\,\delta\mathcal T=1$.
\end{proposition}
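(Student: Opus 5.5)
The plan is to compute the set integral~\eqref{eq:fisst-integral} cardinality by cardinality. We first show that the $M$-th term equals $\Pr(\sum_j R_j=M)$ for independent $R_j\sim\Bern(q_j)$. Summing over $M=0,\ldots,K$ then gives $1$.

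Fix $M\ge 1$ and substitute~\eqref{eq:iclr-likelihood} into the $M$-fold integral. The sum over $\phi\in\operatorname{Inj}(M,K)$ is finite and each summand is nonnegative and measurable, so Tonelli's theorem lets us exchange the sum and the integral. For a fixed injection $\phi$, the integrand factorizes over coordinates: $t_i$ enters only through $\ell_{\phi(i)}(t_i)$. Each Laplace density in~\eqref{eq:iclr-component} integrates to $1$ over $\DepthSpace=\R$; this uses the untruncated normalization and the scale floor $\beta_j>0$. Hence
\begin{equation*}
\int_{\DepthSpace^M} f(\{t_1,\ldots,t_M\},\phi)\,\dd t_{1:M}=\prod_{j\in\im\phi}q_j\prod_{j\notin\im\phi}(1-q_j).
\end{equation*}
The right side depends on $\phi$ only through the image $S=\im\phi$, an $M$-subset of $\{1,\ldots,K\}$. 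Exactly $M!$ injections share each image. The factor $1/M!$ in~\eqref{eq:fisst-integral} therefore cancels this multiplicity, and the $M$-th term becomes
\begin{equation*}
\sum_{|S|=M}\prod_{j\in S}q_j\prod_{j\notin S}(1-q_j)=\Pr\Bigl(\textstyle\sum_jR_j=M\Bigr).
\end{equation*}

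For the empty set, $\operatorname{Inj}(0,K)$ contains only the empty map. This gives $f(\nullsym)=\prod_j(1-q_j)$, which is the $M=0$ (i.e.\ $S=\emptyset$) case of the same formula. Summing over all $M$ enumerates every subset $S\subseteq\{1,\ldots,K\}$ exactly once. The expansion of $\prod_{j=1}^K\bigl(q_j+(1-q_j)\bigr)=1$ then yields the claim.

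The main obstacle is the bookkeeping at the integration step. The density is evaluated on the set $\{t_1,\ldots,t_M\}$, but the integral runs over ordered tuples. We must check that $f(\mathcal T)$, as defined via an enumeration of $\mathcal T$, is a symmetric function of $(t_1,\ldots,t_M)$; this holds because relabeling targets permutes the injections, as noted after~\eqref{eq:iclr-likelihood}. Only then is~\eqref{eq:fisst-integral} well defined and the $1/M!$ cancellation legitimate.

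Tuples with coincident coordinates do not correspond to $M$-element simple sets. They form a Lebesgue-null set in $\DepthSpace^M$, so they do not affect the integral.
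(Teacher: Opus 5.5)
Your proposal is correct and follows the paper's proof: integrate each Laplace factor to one, let the $M!$ injections sharing each image $S$ cancel the $1/M!$, obtain the mass $\sum_{|S|=M}\prod_{j\in S}q_j\prod_{j\notin S}(1-q_j)$, and sum over $M$ to expand $\prod_j[(1-q_j)+q_j]=1$. Your additions are rigor checks the paper leaves implicit: the Tonelli exchange, symmetry of $f$ under target relabeling, and the Lebesgue-null coincident tuples. One small correction: normalization of the Laplace density needs only $\beta_j>0$; the scale floor matters for bounding the likelihood, not for integrating to one.
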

\begin{proof}
Fix $M$ and substitute~\eqref{eq:iclr-joint-assignment} into the integral. Each $\ell_j$ integrates to one. Every size-$M$ component subset admits $M!$ bijections from the integration variables, whose multiplicity is canceled by $1/M!$. The total mass at this cardinality is
\begin{equation}
 \sum_{\substack{S\subseteq[K]\\|S|=M}}
 \prod_{j\in S}q_j\prod_{j\notin S}(1-q_j).
 \label{eq:poisson-binomial-mass}
\end{equation}
Summing over $M=0{:}K$ enumerates each subset once and gives $\prod_j[(1-q_j)+q_j]=1$.
\end{proof}

Normalization gives the complete-set law the standard logarithmic-score identity~\citep{gneiting2007proper}:
\begin{theorem}[Finite-set logarithmic-score propriety]
\label{thm:proper-score}
Let $f^\star$ be a normalized finite-set density with finite entropy and let $f$ be normalized on the same space and reference measure. Whenever cross-entropy is finite,
\begin{equation}
 \E_{\mathcal T\sim f^\star}[-\log f(\mathcal T)]
 =H(f^\star)+\KL(f^\star\|f).
 \label{eq:proper-score}
\end{equation}
The expected score is uniquely minimized, up to equality almost everywhere, by $f=f^\star$. Within a restricted model family, any attained minimizer is a KL projection.
\end{theorem}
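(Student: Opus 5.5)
The plan is to apply the standard decomposition of the logarithmic score directly on the finite-set space with the reference measure~\eqref{eq:fisst-integral}. All expectations are taken with respect to that measure. Concretely, $\E_{\mathcal T\sim f^\star}[g(\mathcal T)]=\int g(\mathcal T)f^\star(\mathcal T)\,\delta\mathcal T$. Entropy and KL are defined analogously:
\[
H(f^\star)=-\int f^\star\log f^\star\,\delta\mathcal T,
\qquad
\KL(f^\star\|f)=\int f^\star\log\frac{f^\star}{f}\,\delta\mathcal T.
\]
The conventions are $0\log 0=0$ and $\KL=+\infty$ if $f^\star$ puts mass where $f=0$. The integral is a finite sum over cardinalities of symmetric Lebesgue integrals. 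Therefore it is simply an integral against a $\sigma$-finite measure on the disjoint union $\bigsqcup_{M\le K}\DepthSpace^M/S_M$, and all measure-theoretic tools (Jensen, dominated splitting of integrals) apply verbatim.

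\textbf{Step 1 (identity).} On the set where $f^\star>0$, I write pointwise
\[
-\log f=-\log f^\star+\log\frac{f^\star}{f}.
\]
I integrate against $f^\star\,\delta\mathcal T$. Finite entropy makes the first term integrable. Finite cross-entropy then makes the second term integrable as the difference of two integrable functions, so linearity gives~\eqref{eq:proper-score}. The null set $\{f^\star=0\}$ contributes nothing. The main care point is this integrability bookkeeping: $-\log f^\star$ can be negative because densities carry units $\mathrm m^{-M}$ and can exceed one. I will therefore not assume nonnegativity of entropy and only use absolute integrability supplied by the hypotheses.

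\textbf{Step 2 (Gibbs inequality).} I show $\KL(f^\star\|f)\ge0$ with equality iff $f=f^\star$ a.e. I would apply Jensen to the concave $\log$:
\[
-\KL=\int_{\{f^\star>0\}} f^\star\log\frac{f}{f^\star}\,\delta\mathcal T\le\log\int_{\{f^\star>0\}} f\,\delta\mathcal T\le\log 1=0.
\]
This uses normalization of $f$, as in Proposition~\ref{prop:normalization} for the model family. Equality in strict Jensen forces $f/f^\star$ to be constant $f^\star$-a.e. Equality in the second inequality forces $f=0$ a.e.\ off $\{f^\star>0\}$. Together with both functions integrating to one, this gives $f=f^\star$ a.e. As an alternative avoiding Jensen's equality case, I would use $\log x\le x-1$ with equality only at $x=1$, which gives the same conclusion cleanly.

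\textbf{Step 3 (consequences).} Since $H(f^\star)$ does not depend on $f$, the expected score is minimized exactly when $\KL$ is minimized. By Step 2, the unique minimizer up to a.e.\ equality is $f=f^\star$. For a restricted family $\{f_\theta\}$, any attained minimizer of the expected score minimizes $\KL(f^\star\|f_\theta)$ over the family, which is the definition of a KL projection. I expect no real obstacle beyond Step 1's handling of possibly infinite or sign-indefinite terms. The main subtlety worth stating is that uniqueness is only up to equality almost everywhere with respect to the set reference measure, and is not parameter uniqueness.
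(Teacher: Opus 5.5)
Your proposal is correct and follows the paper's argument: the paper likewise adds and subtracts $-\int f^\star\log f^\star\,\delta\mathcal T$, identifies the remainder as $\KL(f^\star\|f)\ge 0$, and notes that the cross-entropy is infinite when $f$ vanishes on positive $f^\star$-mass. You additionally spell out the integrability bookkeeping for sign-indefinite densities and the Gibbs equality case yielding $f=f^\star$ almost everywhere, which the paper leaves implicit.
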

\begin{proof}
Add and subtract $-\int f^\star\log f^\star\,\delta\mathcal T$. The remainder is $\KL(f^\star\|f)\geq0$. If $f$ vanishes on positive $f^\star$ mass, cross-entropy is infinite. Under corresponding input-integrability conditions, the conditional identity can also be averaged over images and camera rays.
\end{proof}
Propriety characterizes the induced density, not component identifiability, calibration, or optimization. Auxiliary penalties, unequal existence weights, temperature changes, and target-dependent loss rescaling can modify this normalized score and require separate analysis.

\subsubsection{Count and Conditional Location Density}
\label{sec:count-factor-details}

Normalization also yields the count--geometry factorization. Continuous draws are distinct almost surely, so $N=|\mathsf X|=\sum_jR_j$. For $S\subseteq[K]$ of size $M$, define
\begin{equation}
\begin{aligned}
w_q(S)&=\prod_{j\in S}q_j\prod_{j\notin S}(1-q_j),\\
G_S(\mathcal T)&=
\sum_{\substack{\phi:[M]\hookrightarrow[K]\\\im(\phi)=S}}
\prod_{i=1}^{M}\ell_{\phi(i)}(t_i).
\end{aligned}
\label{eq:count-conditional-subset-terms}
\end{equation}
Grouping by occupied subset gives $f(\mathcal T)=\sum_{|S|=M}w_q(S)G_S(\mathcal T)$. The $M!$ terms in each $G_S$ cancel the cardinality-slice factor $1/M!$ on integration:
\begin{equation}
\begin{aligned}
\int_{|\mathcal T|=M}G_S(\mathcal T)\,\delta\mathcal T&=1,\\
p_q(M)=\Prob(N=M)&=\sum_{|S|=M}w_q(S).
\end{aligned}
\label{eq:count-slice-normalization}
\end{equation}
For $p_q(M)>0$, the conditional geometric density is
\begin{equation}
f(\mathcal T\mid N=M)
=\sum_{|S|=M}\frac{w_q(S)}{p_q(M)}G_S(\mathcal T),
\qquad |\mathcal T|=M.
\label{eq:conditional-cardinality-density}
\end{equation}
The weights sum to one, giving~\eqref{eq:iclr-count-factorization}. For $0<M<K$, relative presence weights can affect conditional geometry through the subset mixture. At $M=K$, only $S=[K]$ remains, eliminating dependence on $q$; at $M=0$, the conditional density is one. Conditioning is undefined if $p_q(M)=0$.

\subsection{Selecting Components Versus Assigning Surfaces}
\label{sec:selection-correspondence}

The subset mixture separates selection from correspondence. For a complete distinct target, $\Phi\sim\omega$ selects $S=\im(\Phi)$ with $|S|=M$. At $M=1$, within-subset correspondence is unique, leaving only selection ambiguity. At $M=K$, the subset is fixed despite $K!$ potentially plausible correspondences. Hence, for finite outputs under positive Laplace densities,
\begin{equation}
 M=K:\quad \rho_j=1,\qquad
 \frac{\partial\mathcal L}{\partial e_j}=q_j-1.
 \label{eq:full-capacity-presence}
\end{equation}
For $M=0$, occupancy is zero and the gradient is $q_j$. At identical outputs, MAP and ExactMB thus share full-capacity existence gradients; training can differ through geometry, recurrence, shared parameters, and other cardinalities. Complete four-target rays with four slots therefore have posterior presence targets of one despite uncertain depth-to-component correspondence.

\subsection{Mass Correction and Specialization Are Different Directions}
\label{sec:selection-curvature}

To distinguish count correction from component selection, fix locations and scales, and write $R_j(\Phi)=\mathbf1[j\in\im(\Phi)]$. Each assignment has log density
\[
 \begin{aligned}
 \log f(\mathcal T,\phi)&=\sum_j e_jR_j(\phi)-\sum_j\softplus(e_j)\\
 &\quad+\sum_i\log\ell_{\phi(i)}(t_i).
 \end{aligned}
\]
Every injection satisfies $\sum_jR_j=M$. A common finite-logit shift $e_j\mapsto e_j+a$ adds $Ma-\sum_j[\softplus(e_j+a)-\softplus(e_j)]$ to every log weight and preserves the entire ownership posterior. For the shifted loss,
\[
 \begin{aligned}
 \frac{\mathrm d\mathcal L(a)}{\mathrm da}&=\sum_j\sigma(e_j+a)-M,\\
 \frac{\mathrm d^2\mathcal L(a)}{\mathrm da^2}
 &=\sum_j\sigma(e_j+a)[1-\sigma(e_j+a)]>0.
 \end{aligned}
\]
For $0<M<K$, the derivative increases from $-M$ to $K-M$, yielding a unique finite optimum matching expected count without changing ownership. Fixed centers and positive scales suffice; symmetry is unnecessary. Shared-network updates need not follow this output-space direction.

Relative logit changes, however, can alter selection. The log-partition covariance identity~\citep{wainwright2008graphical} gives $\partial\rho_j/\partial e_k=\operatorname{Cov}_{\omega}(R_j,R_k)$, hence
\begin{equation}
 \nabla^2_{\mathbf e}\mathcal L
 =\operatorname{diag}\{q_j(1-q_j)\}
  -\operatorname{Cov}_{\omega}(\mathbf R).
 \label{eq:presence-hessian}
\end{equation}
\begin{proposition}[Fractional symmetry is a saddle with surplus capacity]
\label{prop:presence-saddle}
Suppose all slots have the same density value at each target, $0<M<K$, and $q_j=p=M/K$. The existence gradient vanishes. With $v=p(1-p)$, the Hessian eigenvalues are $v$ along $\mathbf1/\sqrt K$ and $-v/(K-1)$ along each of the $K-1$ orthogonal contrast directions between component logits.
\end{proposition}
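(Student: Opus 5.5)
Under the symmetry hypothesis every injection has the same geometric factor $\prod_i\ell_{\phi(i)}(t_i)$. Since all $q_j$ are equal, every injection also has the same presence factor $p^M(1-p)^{K-M}$. Hence the assignment posterior $\omega$ in~\eqref{eq:iclr-posterior} is uniform over $\operatorname{Inj}(M,K)$, and the occupied set $S=\im(\Phi)$ is a uniformly random $M$-subset of $[K]$. The plan is to read off the gradient from~\eqref{eq:iclr-gradients} and the Hessian from~\eqref{eq:presence-hessian}. Both reduce to moments of the indicator vector $\mathbf R$ of a uniform $M$-subset.

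For the gradient, use exchangeability. Each slot is occupied with probability $M/K$, so $\rho_j=M/K=p=q_j$, and~\eqref{eq:iclr-gradients} gives $\partial\mathcal L/\partial e_j=0$.

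For the Hessian, use the hypergeometric moments:
\[
\operatorname{Var}(R_j)=p(1-p)=v,
\qquad
\E[R_jR_k]=\frac{M(M-1)}{K(K-1)}\quad (j\ne k).
\]
Substituting $p=M/K$ gives
\[
\operatorname{Cov}(R_j,R_k)=\frac{M(M-1)}{K(K-1)}-\frac{M^2}{K^2}=-\frac{v}{K-1}.
\]
Thus $\operatorname{Cov}_\omega(\mathbf R)=v\bigl[\tfrac{K}{K-1}I-\tfrac{1}{K-1}\mathbf 1\mathbf 1^\top\bigr]$. A cleaner route to the same matrix is to note that $\sum_jR_j=M$ is deterministic. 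Therefore $\operatorname{Cov}(\mathbf R)\mathbf1=0$, and by symmetry $\operatorname{Cov}(\mathbf R)$ must be a multiple of the centering projector $I-\mathbf1\mathbf1^\top/K$; matching the trace $Kv$ fixes the multiple as $Kv/(K-1)$.

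Now plug into~\eqref{eq:presence-hessian}, whose first term is $vI$:
\[
\nabla^2_{\mathbf e}\mathcal L=vI-\frac{Kv}{K-1}\Bigl(I-\frac{\mathbf1\mathbf1^\top}{K}\Bigr).
\]
Along $\mathbf1/\sqrt K$ the projector vanishes, giving eigenvalue $v$. This agrees with the second derivative of the common-shift computation, $\sum_j q_j(1-q_j)=Kv$ per unit $a$, once we normalize: the unit vector $\mathbf 1/\sqrt K$ gives $Kv/K=v$. On the $(K-1)$-dimensional complement of $\mathbf1$ the eigenvalue is $v-Kv/(K-1)=-v/(K-1)$.

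Since $0<M<K$, we have $v>0$, so there are eigenvalues of both signs and the stationary point is a saddle. The main care point is justifying that~\eqref{eq:presence-hessian} applies with the covariance taken under the uniform $\omega$ at the evaluation point. This follows from $\mathcal L$ being a log-partition function in $\mathbf e$ with sufficient statistic $\mathbf R$, where $\mathbf R$ has fixed sum $M$ and the common $\sum_j\softplus(e_j)$ term contributes the diagonal. I expect this to be routine given the earlier derivation.
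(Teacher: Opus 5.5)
Your proposal is correct and follows essentially the same route as the paper. The uniform posterior over size-$M$ subsets gives $\rho_j=p=q_j$ and the hypergeometric covariances $\operatorname{Var}(R_j)=v$ and $\operatorname{Cov}(R_j,R_k)=-v/(K-1)$, which you substitute into~\eqref{eq:presence-hessian}. The only difference is cosmetic: you write the Hessian as $vI$ minus a multiple of the centering projector, whereas the paper notes that it has zero diagonal and off-diagonal entries $v/(K-1)$ and reads off the eigenvalues from its row sum and its action on zero-sum vectors.
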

\begin{proof}
The posterior over size-$M$ subsets is uniform, with $\rho_j=p$, $\operatorname{Var}(R_j)=v$, and off-diagonal covariance $-v/(K-1)$. The Hessian has zero diagonal and off-diagonal entries $v/(K-1)$. Its row sum is $v$, and its action on a zero-sum vector is multiplication by $-v/(K-1)$.
\end{proof}
At empty or full capacity, deterministic occupancy instead gives strict convexity in finite existence logits at fixed localization. The network parameterization and optimizer preconditioning determine how optimization in parameter space follows this output-space geometry.

\paragraph{Expected count, count probability, and gates.}
\label{sec:appendix-count-gate-pattern}
For decoding, distinguish expected count from the pre-thresholding, pre-merging distribution:
\begin{equation}
 p_q(n)=[z^n]\prod_{j=1}^{K}(1-q_j+q_jz).
 \label{eq:poisson-binomial-count}
\end{equation}
Four $.25$ gates have mean count one, count-one probability $27/64$, and variance $3/4$. A strict cutoff below $.25$ retains all four; otherwise none survive. Merging can remove coincident depths. For one target and $K>1$ identical component densities, the tied-gate likelihood maximum at $q=1/K$ is the independent-logit saddle above. Matching expected count therefore guarantees neither concentration of the pre-decoding count distribution nor correctness of the count after decoding.

\FloatBarrier
\clearpage
\section{Implementation, Evaluation, and Reproducibility}
\suppressfloats[t]
\label{sec:reproducibility}

We describe auxiliary-free ExactMB, evaluation, and annotation semantics, distinguishing primary ablations from point-process, geometric-loss, architecture, and fixed-ray studies.

\subsection{Reference Implementation and Training Recipe}
\label{sec:implementation-details}
\begin{table*}[t]
\centering
\AppendixTableSetup
\caption{\textbf{Auxiliary-free ExactMB configuration.} The configuration connects each implementation choice to its operational role. Variant-specific changes are stated with the corresponding comparisons. Section~\ref{sec:ablation-factors} describes the available historical configuration records and their limits.}
\label{tab:layerset-reference-recipe}
\AppendixTextTableStyle
\begin{tabularx}{\linewidth}{@{}>{\raggedright\arraybackslash}p{.19\linewidth}>{\raggedright\arraybackslash}X>{\raggedright\arraybackslash}X@{}}
\toprule
\textbf{Component} & \textbf{Reference choice} & \textbf{Operational meaning} \\
\midrule
Representation & $K=4$ unordered Bernoulli--Laplace components & Capacity bounds the emitted set; all four decoder passes execute. \\
Encoder / decoder & DINOv2 ViT-L \citep{dino}; metric Depth Anything V2 initialization; shared recurrent DPT \citep{ranftl2021} & One encoder, shared depth/scale/existence heads, absolute rather than cumulative depths. \\
Parameterization & $d=20\operatorname{softplus}(u)$; $q=\sigma(e)$;\newline $\beta=\operatorname{clip}_{[.1,10]}(.1+\operatorname{softplus}(v))$ & Camera-$Z$ meters. The factor $20$ is not a depth clamp. \\
Recurrence & Detached existence gate; learned removal strength initialized $.05$; update cap $.25$ & Bound~\eqref{eq:recurrence-bound} is relative to each feature token's norm. \\
Targets / mask & Compacted provided L1/L3/L5/L7 channels; element mask from positive valid depth & No numeric target sorting or deduplication; all-false masks contribute $M=0$ in the historical all-ray mean. \\
Complete criterion & ExactMB only; outer weight $1$, temperature $1$, existence weight $1$; no alignment & All geometric auxiliary weights are zero. Padded enumeration includes the $(K-M)!$ correction. \\
Data ordering & 14,800 training images; equal-weight four-bucket round robin without replacement & Interleaves remaining buckets without equalizing epoch mass or per-ray cardinality exposure. \\
Preprocessing & Aspect-preserving resize, $518\times518$ crop, $p=.5$ horizontal flip, ImageNet RGB normalization & Shared image/depth/mask geometry; metric units are retained. \\
Optimization & AdamW; $10^{-5}$ peak LR; encoder multiplier $.1$; moments $(.9,.999)$; decay $.01$; gradient-norm clipping $.5$ & Batch $2$, one recorded process; four training seeds $7/42/61/123$. \\
Schedule / endpoint & Warmup $5{,}000$; hold $127{,}200$; cosine decay to $.02$ of peak at $800{,}000$ & Terminal/latest checkpoint; no best-validation alias substitution in principal comparisons. \\
Principal decoder & Strict $q>.01$, finite $d>.02$\,m, ascending sort, $.02$\,m last-retained gap & No relative gap, scale/shift alignment, or ground-truth-dependent selection. \\
\bottomrule
\end{tabularx}
\end{table*}

\subsubsection{From RGB and annotations to slot parameters}
As summarized in Table~\ref{tab:layerset-reference-recipe}, the image is encoded once, and four recurrent passes predict absolute depth centers, Laplace scales, and existence logits:
\begin{equation}
 d_j=20\operatorname{softplus}(u_j),\qquad
 \beta_j=\operatorname{clip}_{[.1,10]}(.1+\operatorname{softplus}(v_j)),\qquad
 q_j=\sigma(e_j).
 \label{eq:reference-output-parameterization}
\end{equation}
Depth is camera optical-axis $Z$ in meters. The factor 20 scales rather than bounds the output; centers need not increase across passes. The loader converts millimeter PNGs to meters, zeros nonfinite, nonpositive, or above-80-m entries, and compacts valid L1/L3/L5/L7 channels in supplied order without sorting depths or deduplicating ties. RGB and depth/mask tensors share an aspect-preserving resize to cover a $518\times518$ crop with 14-pixel-compatible dimensions, followed by a shared random crop and horizontal flip with probability $.5$. RGB uses ImageNet normalization. Training and principal evaluation fit no target-dependent affine scale or shift.

\subsubsection{Presence-aligned recurrent update}
At encoder scale $\ell$ and token $n$, let $\mathbf F_{j-1,\ell n}$ be the previous feature state, $\mathbf C_j$ decoded features, and $\mathcal R_\ell$ the reverse-DPT projection. With $g=\sigma(a)$, $a$ initialized to $\operatorname{logit}(.05)$, $\eta_{\rm cap}=.25$, and $\epsilon=10^{-6}$, the update is
\begin{align}
 \mathbf U_{j,\ell n}&=g\,\sg(q_{j,n})\mathcal R_\ell(\mathbf C_j)_n,
 \label{eq:gated-proposal}\\
 \gamma_{j,\ell n}&=\min\!\left\{1,
 \frac{\eta_{\rm cap}\max\{\|\mathbf F_{j-1,\ell n}\|_2,\epsilon\}}
 {\max\{\|\mathbf U_{j,\ell n}\|_2,\epsilon\}}\right\},
 \label{eq:gate-cap}\\
 \mathbf F_{j,\ell n}&=\mathbf F_{j-1,\ell n}-\gamma_{j,\ell n}\mathbf U_{j,\ell n},\nonumber\\
 \|\mathbf F_{j,\ell n}-\mathbf F_{j-1,\ell n}\|_2&
 \leq\eta_{\rm cap}\max\{\|\mathbf F_{j-1,\ell n}\|_2,\epsilon\}.
 \label{eq:recurrence-bound}
\end{align}
Detaching the presence gate blocks direct gradients from later-slot losses to earlier existence probabilities through feature removal, while indirect shared-feature interactions remain. The shared existence head uses a $128\!\to\!32$ $3\times3$ convolution, rectifier, and $32\!\to\!1$ projection. All $K$ passes execute.

\subsubsection{Training loss and the annotation interface}
Auxiliary-free ExactMB uses linear-depth Laplace densities with unit outer weight, temperature, and existence weight, and no geometric auxiliary terms. Training averages over all $BHW$ rays, treating all-false masks as $M=0$ and supplied ties as separate valid entries. Element masks do not distinguish empty targets from unannotated rays. Section~\ref{sec:synthetic-interface-audit} relates these annotations to the likelihood's assumptions of complete, distinct ray-wise depth targets.

\subsubsection{Training schedule and sampling}
Table~\ref{tab:layerset-reference-recipe} specifies the training data, optimizer, and four-seed schedule. Equal-weight cardinality buckets are visited round-robin without replacement, skipping exhausted buckets. This interleaving does not equalize full-epoch bucket mass or per-ray exposure. Primary results use terminal/latest checkpoints at 800k updates. Section~\ref{sec:ablation-factors} details the ablation factors.

\subsubsection{Native selection and geometric decoding}
\label{sec:control-selectors}
Selection is variant-specific. Depth stacking selects all $K$ slots. Count regularization predicts categorical count probabilities $r_m(x)$ for $m=0,\ldots,K$ and selects a prefix of length $\argmax_m r_m(x)$. Bernoulli-slot variants select components with finite $q_j>\tau$, whereas the first two variants ignore this cutoff. A common geometric decoder then keeps finite depths $d_j>d_{\min}$ and sorts them. An empty candidate list yields an empty prediction. Otherwise, it retains the shallowest candidate and each later candidate more than $\Delta_{\rm merge}$ beyond the last retained depth. Rejected candidates leave this reference unchanged: with a $.02$-m gap, $1.000,1.015,1.030$ retains $1.000,1.030$ m.

The principal thresholds are $\tau=.01$, $d_{\min}=.02$\,m, and $\Delta_{\rm merge}=.02$\,m, with strict comparisons and no relative-gap term, target cardinality, alignment, or test-set threshold optimization. Accepted depths form a valid prefix for four-rank MD/LD-Syn scoring. Sorting defines front-to-back evaluator ranks, while target--component assignments are determined by the training objective.

\subsubsection{Batched ExactMB implementation and checks}
\label{sec:batched-exactmb-implementation}
Predictions, targets, and element masks have shape $B\times K\times H\times W$. For target-column validity $m_i$, the broadcast pair costs are
\[
 C_{ji}=\begin{cases}
 |d_j-t_i|/\beta_j+\log(2\beta_j)-\operatorname{logsigmoid}(e_j),&m_i=1,\\
 -\operatorname{logsigmoid}(-e_j),&m_i=0.
 \end{cases}
\]
Finite placeholders replace invalid targets before residual computation. Stable log-sigmoid terms avoid logarithms of rounded probabilities. For each cached permutation $p$, sum $S_p=\sum_jC_{j,p(j)}$. The exact per-ray loss is
\[
 -\operatorname{logsumexp}_{p}(-S_p)+\log\Gamma(K-M+1),\qquad M=\sum_i m_i.
\]
The correction removes $(K-M)!$ permutations of indistinguishable null columns. The $K=4$ implementation differentiates through 24 padded permutations at arithmetic cost $O(BHW\,K!K)$. Empty targets reduce to $-\sum_j\log(1-q_j)$, while full targets need no correction. Checks compare direct injections, padded sums, target permutations, finite-difference gradients, and strict decoding boundaries. For diagnostics, occupancy excludes null columns, and subtracting $\log((K-M)!)$ from padded-permutation entropy gives the entropy over distinct injections.

\subsection{Evaluation Definitions and Adapters}
\label{sec:evaluation-details}
\label{sec:evaluation}
The benchmark annotations in Table~\ref{tab:benchmark-roles} give ACC different meanings: ordinal-tuple, presence-pattern, or count accuracy. LD OverPred complements invalid-tuple rejection, while MD OverPred complements opaque-point rejection; this conversion preserves sample SD and reverses paired-effect signs. Each metric refers to its annotation population, which provides neither dense real multilayer metric ground truth nor a complete enumeration of physical interfaces at every pixel.
\begin{table*}[t]
\centering
\AppendixTableSetup
\caption{\textbf{Complementary annotations for visible-layer evaluation.} Each benchmark defines absence through its annotations. Together, they assess ordering, presence, and metric depth.}
\label{tab:benchmark-roles}
\AppendixTextTableStyle
\begin{tabularx}{\linewidth}{@{}>{\raggedright\arraybackslash}p{.15\linewidth}>{\raggedright\arraybackslash}p{.23\linewidth}>{\raggedright\arraybackslash}X>{\raggedright\arraybackslash}X@{}}
\toprule
\textbf{Benchmark} & \textbf{Available annotation} & \textbf{Reported measures} & \textbf{Not established by this view alone} \\
\midrule
\LDReal\newline300 real images & Sparse valid/invalid tuples, requested ranks, and near-to-far relations & ACC on valid tuples; OverPred on invalid tuples; same/mixed-rank breakdowns & Dense metric accuracy, physical surface count on every ray, or a literal transparent/opaque material partition. \\
\MDReal\newline3,161 real pairs & Transparent mask at query points; foreground/background order; TT/TO/OO and Same/Reverse strata & Recall, OverPred, pattern ACC, ordinal ACC, and Joint ACC on their declared point/pair populations & A census of all physical interfaces or dense real metric depth. \\
\LDSyn\newline500 synthetic images & Four dense metric channels and validity masks, in camera-$Z$ meters & Count ACC, Count MAE, under/over-count, and conditional same-rank AbsRel with support & Unconditional real-image metric generalization or calibration of existence probabilities. \\
\bottomrule
\end{tabularx}
\end{table*}

\subsubsection{Selection and diagnostic contexts}
\begin{table}[htbp]
\centering
\AppendixTableSetup
\caption{\textbf{Decoding rules and populations across evaluation settings.} Each setting specifies its own thresholds and aggregation rules. Diagnostic settings leave the principal decoder unchanged.}
\label{tab:decoder-contexts}
\AppendixTextTableStyle
\begin{tabularx}{\linewidth}{@{}>{\raggedright\arraybackslash}p{.22\linewidth}>{\raggedright\arraybackslash}p{.33\linewidth}>{\raggedright\arraybackslash}X@{}}
\toprule
\textbf{Context} & \textbf{Selection and geometry} & \textbf{Population}\\
\midrule
Principal terminal suite & Native selector; Bernoulli cutoff $.01$; $.02$ m floor and last-retained gap & Full splits; first four accepted ranks on MD/Syn; no alignment\\
Primary cutoff sweep & Same 12 Bernoulli-slot checkpoints; six cutoffs; fixed geometry & Evaluates operating points without treating them as replicates or test-selected optima\\
Training / Syn histories & Raw expected mass or $q>.5$ activation; no principal geometric filter & Minibatch exposure / fixed 64-image cohort; image-conditional validation means\\
Fixed-ray geometry & $q>.01$, depth $>.02$ m, gap $>.02$ m; injective tolerance & 4,513 fixed supplied-distinct-target rays from 20 images; one auxiliary-regularized run\\
Qualitative displays & $q>.5$ with depth sorting; pinhole geometry & Selected examples and checkpoints; illustrative geometry and graphics\\
Released models & Dataset-specific masks, order and scale below & Single released estimates; native-task comparisons remain separate\\
\bottomrule
\end{tabularx}
\end{table}
Table~\ref{tab:decoder-contexts} separates principal and diagnostic protocols. ``Before fallback'' denotes post-decoder validity without shallower-depth substitution, as used for principal presence scores. Appendix~\ref{sec:primary-geometry-operating-points} presents the 20-checkpoint geometric comparison, with cutoff curves for 12 Bernoulli-slot checkpoints alongside the native fixed-count and count-regularized exports.

\subsubsection{LD-Real: recovery and rejection of queried ranks}
The 300-image validation split of the LayeredDepth real benchmark supplies sparse valid/invalid pairs, triplets, and quadruplets \citep{layereddepth2025}. For a tuple of pixel/rank requests with decoded depths $\hat d_1,\ldots,\hat d_k$, correct recovery requires all requested ranks and their annotated strict near-to-far order. Correct rejection of an annotated-invalid tuple requires every requested rank to be absent:
\begin{align}
 \operatorname{correct}_{\rm real}&=\indicator[\bigwedge_r(\hat d_r\ne\missing)\land\hat d_1<\cdots<\hat d_k],\label{eq:ld-real-correct}\\
 \operatorname{correct}_{\rm fake}&=\indicator[\bigwedge_r(\hat d_r=\missing)].\label{eq:ld-fake-correct}
\end{align}
The main-paper ACC is micro-averaged over valid quadruplets. OverPred is the fraction of invalid quadruplets with any requested prediction, so the two metrics have separate denominators. Because tuples can span rays, tuple arity, requested rank, and per-ray cardinality are distinct quantities. Same-rank tuples request one rank index at every point, whereas mixed-rank tuples request different indices. A separate last-visible diagnostic checks the strict ordering of the deepest retained values on mixed valid tuples. Missing predictions count as failures, and invalid tuples have no last-visible target.

\subsubsection{MD-3K: patterns and material-conditioned geometry}
MD-3K provides 3,161 real point pairs with foreground/background relations and material labels \citep{twodepths2026}. At point $p$ of pair $n$, we map transparency $z_{np}$ to target validity $\mathbf v^*(z)=(1,z,0,0)$ in L1/L3/L5/L7 order. For post-decoder bits $\hat{\mathbf v}_{np}$,
\begin{equation}
 E_n=\bigwedge_{p=1}^{2}[\hat{\mathbf v}_{np}=\mathbf v^*(z_{np})],\qquad
 \operatorname{ACC}_{\rm MD}=N^{-1}\sum_n\indicator[E_n].
 \label{eq:md-exact}
\end{equation}
Pattern ACC checks all eight pair-level bits. Point-level measures distinguish missed backgrounds from extra emissions. With $\hat z_{np}$ denoting decoded L3 validity,
\begin{equation}
 \operatorname{Recall}=\frac{\sum_{n,p}z_{np}\hat z_{np}}{\sum_{n,p}z_{np}},\qquad
 \operatorname{OverPred}=\frac{\sum_{n,p}(1-z_{np})\hat z_{np}}{\sum_{n,p}(1-z_{np})}.
 \label{eq:md-cover-reject}
\end{equation}
The frozen split contains 1,485/1,350/326 TT/TO/OO pairs and 4,320 transparent/2,002 opaque requests. This validity convention does not imply that a ray contains at most two physical surfaces.

Foreground ordering uses L1; background ordering uses L3 at transparent points and L1 at opaque points, without replacing missing transparent-point L3 with L1. Let $O_n$ require both orderings to be correct. Headline multilayer ordinal ACC pools TT and TO, while Figure~\ref{fig:iclr-set-quality} uses all pairs. The all-pair evaluation retains six pairs---three OO/Reverse and three TO/Reverse---whose foreground and background orderings cannot both hold under primary front-to-back decoding. Joint correctness combines presence $E$ and ordering $O$ on a common evaluation population $S$:
\begin{equation}
 \Pr(E\cap O\mid S)=\Pr(E\mid S)\Pr(O\mid E,S).
 \label{eq:md-joint-decomposition-protocol}
\end{equation}
If no pair satisfies $E$, conditional accuracy is undefined and joint frequency is zero. Pattern ACC thus evaluates pair-level presence, Recall/OverPred individual L3 requests, and Joint ACC presence with both orderings. Same/Reverse denotes agreeing/flipped foreground and background orders. Released ordinal scores may use shallower fallback, unlike Joint ACC.

\subsubsection{LD-Syn: cardinality and metric localization}
The 500 held-out images provide four dense metric channels and validity masks \citep{layereddepth2025}. Target and decoded masks are compact prefixes on this split, so four-bit pattern accuracy equals count accuracy. The principal population is occupied rays $\Omega_+=\{x:c_x\ge1\}$, with
\begin{align}
 \operatorname{ACC}_c&=\operatorname{mean}_{x:c_x=c}\indicator[\hat c_x=c],\qquad
 \operatorname{Macro}=\tfrac14\sum_{c=1}^{4}\operatorname{ACC}_c,\label{eq:syn-count-exact}\\
 \operatorname{MAE}&=\operatorname{mean}_{x\in\Omega_+}|\hat c_x-c_x|.\label{eq:syn-cardinality-metrics}
\end{align}
Occupied-micro ACC and under/over-count rates pool $\Omega_+$. Explicitly labeled all-pixel fields also include empty masks. Target counts retain supplied ties without distance-based deduplication. For localization, own-slot error compares each decoded rank with its target channel where both are valid. Let $\mathcal S_3$ be this intersection for L3:
\begin{equation}
 \operatorname{AR}^{\rm own}_3=\frac{100}{|\mathcal S_3|}\sum_{x\in\mathcal S_3}
 \frac{|\hat d_{3x}-d_{3x}|}{d_{3x}}.
 \label{eq:l3-own-absrel}
\end{equation}
Errors and counts are pooled before division. Abstaining on difficult rays changes the evaluated population, so conditional error must be read with support. Section~\ref{sec:primary-geometry} gives all primary rank-wise denominators, support recall, and depth-accurate recall/precision. The complementary injective-set diagnostic uses a distinct cohort and an absolute geometric tolerance.

\subsubsection{Replication and retrospective selection}
Training seed is the replication unit. Absolute results give means and sample SD; paired comparisons report within-seed changes, SD, and directional agreement. Pixel/image variation is not training-seed replication. Four seed pairs cannot attain $p<.125$ in a two-sided exact sign-flip test, so we emphasize effect sizes and metric trade-offs. Primary MD pattern ACC and accompanying Recall/OverPred, LD tuple outcomes, and synthetic count/geometry were organized retrospectively, without preregistration or score-blind selection. Uncertainty summaries therefore include each cohort's selection context.

\subsubsection{Released-system transfer and adapter boundaries}
\label{sec:released-adapters}
Table~\ref{tab:iclr-external} evaluates released models without retraining. SeeGroup \citep{seegroup2026} decodes with a $.01$ validity-score cutoff and $.02$ minimum depth and gap. World Tracing \citep{worldtracing2026} maps its first four intersections to L1/L3/L5/L7 without reordering or merging. LD uses finite positive camera-$Z$ alone, whereas MD requires the mask and finite positive $Z$. Each image uses one 20-step sample with seed $42+i$. Releases r69l/r75b differ in resolution and XYZ normalization.

LaRI \citep{lari2026} keeps native intersection order without sorting or merging. The scene release uses finite positive $Z$ without a mask checkpoint, whereas the separately trained object release applies its ray-stop mask. This compares distinct checkpoints. Scale conventions differ: SeeGroup/LaRI use relative scale, WT r69l returns relative XYZ, and r75b restores metric units with fixed normalization statistics. Ordinal annotations assess ordering, while metric accuracy requires depth ground truth.

All five released LD-Syn evaluations use the same 500 images and 460,008,237 occupied rays as the primary results, retaining supplied ties in the target counts. MD uses the eight-bit events and fixed point denominators defined above. These scores measure transfer to the supplied visible targets. Differences in native masks, ordering, scale, training, and stochastic generation prevent attributing the results solely to architecture or ranking each model on its native task.

Released-model AbsRel uses a different calibration protocol from the primary comparison. For all five models, the evaluator fits affine scale and shift per image and layer on valid GT--prediction pairs with $0<d,\hat d<10^4$, retains native values when fewer than 16 fitting pixels exist, and clips predictions to $[.001,30]$\,m. We average four conditional same-rank AbsRel scores without shallower-layer inheritance. SeeGroup's calibrated pass covers all 500 images and preserves decoded validity masks. These GT-calibrated estimates assess conditional within-layer localization after alignment, leaving native scale and inter-layer geometry to separate evaluation. Support remains important: LaRI objects retains only $3.63\%/1.64\%$ of third-/fourth-rank target support, so its deeper errors describe a small selected population rather than recovery of all annotated visible layers.

\suppressfloats[t]
\subsection{Ablation Factors and Configuration Provenance}
\label{sec:ablation-factors}

The five primary variants in Table~\ref{tab:iclr-primary-ablation} ask how to determine emitted cardinality and assign observed depths to components. All share the recorded data, encoder/recurrent trunk, capacity, optimization schedule, and four-seed terminal-800k protocol, with geometric auxiliary terms disabled.

Count regularization versus fixed cardinality tests a complete global-stopping configuration: categorical predictor, loss, and selector. Count regularization averages depth costs over valid entries; other variants average per-ray sums. For assignment, MAP, ExactMB, and ordered assignment share model and training fields within each seed, differing only in assignment criterion. Cross-group comparisons also change recurrence gating and selection rather than isolating an additional loss.

A separate four-seed study compares PPP, MAP, and ExactMB. Its recorded synthetic-data sampler and scheduler differ from the primary cardinality-interleaved sampler and warmup--hold--cosine schedule; these snapshots are incomplete launch records. Within this cohort, PPP changes the point-process law, while MAP and ExactMB differ in assignment criterion. The PPP control uses $\lambda_j=\operatorname{softplus}(e_j)$, with decoder score $\sigma(e_j)$ equal to its nonzero-count probability before the numerical rate floor. An intensity component may generate multiple returns. ExactMB minus MAP changes MD pattern accuracy by $-1.35\pm6.69$ points in the primary study and $+1.96\pm1.37$ in the mechanism study (paired mean and sample SD). Interpretation is therefore recipe-specific.

Endpoint records are fuller than training histories. Checkpoint and evaluator hashes cover all 20 primary and 12 point-process-study endpoints, but synthetic geometry records lack a historical evaluator-source hash. The 20 primary exports describe resumed states without a source commit; the mechanism-study catalog preserves selected evaluator-resolved fields. These records support the comparisons but do not identify the source version throughout every historical training segment.

\subsection{Annotation Semantics and the Training Interface}
\label{sec:synthetic-interface-audit}
The supplied annotations do not certify the completeness or distinctness assumed by the set likelihood. Compaction retains supplied order and quantized repetitions, and all-false element masks contribute $M=0$ without a missingness flag. ExactMB sums assignments over these entries, whereas ordered assignment fixes their compacted-channel correspondence. Normalization and propriety hold for the continuous simple-set law, whose assumptions need not be satisfied by these annotations.

In the 20-image $432\times768$ audit, 202 of 6,635,520 rays at one image's right boundary have all-false masks of unresolved empty-versus-missing status. Of the remaining rays, 11,604 (.175\%) have exact valid-depth ties and 66,567 (1.003\%) have a pair within 2 cm. The 2 cm prediction filter leaves targets unchanged, so pruning correct predictions of close targets can undercount.

The full-resolution audit tests depth validity (positive and no greater than 80 m) across 460,800,000 pixels in 500 synthetic images. No invalid channel precedes a valid one. Cardinalities one through four occupy 402,453,131 / 29,100,688 / 17,298,739 / 11,155,679 pixels, with 791,763 empty masks. Adjacent valid depths tie in 1,720,016 pixels and descend in 37,774. Thus, compact prefixes equate pattern and count accuracy without certifying sorted, distinct targets or physically empty rays.

\FloatBarrier
\Needspace{10\baselineskip}
\section{Additional Experimental Results}
\suppressfloats[t]
\label{sec:experiments}
\label{sec:evaluation-questions}

Count agreement, depth accuracy, and surface recovery can favor different choices. We expand primary four-seed terminal-800k comparisons with paired effects, rank-wise geometry, and error breakdowns, then examine separate point-process, geometric-loss, and architecture studies. Operating-point analyses and real-world visualizations connect decoding choices to retained support.

\subsection{Primary Ablations and Released-Model Transfer}
\label{sec:exp-behavior}

The primary variants---depth stacking (fixed count), explicit count regularization, MAP assignment, marginalized assignment (ExactMB), and ordered assignment---share a prediction family while varying how many depths are emitted and how predictions are assigned to targets.

\subsubsection{Paired Effects of Explicit Count Regularization}
\label{sec:exp-system}

Table~\ref{tab:iclr-primary-ablation} reports all five variants without geometric auxiliaries at the principal operating point. Fixed-count and count-regularized configurations differ in objective, depth-loss reduction, and selector. The three presence-based variants instead share gated recurrence and differ only in recorded assignment criterion per seed, as specified in Appendix~\ref{sec:ablation-factors}. We first examine the count-regularized contrast.

\begin{table}[!htbp]
\centering
\AppendixTableSetup
\caption{\textbf{The count-regularized configuration reduces false emission at the cost of valid returns.}
Paired count-regularized minus fixed-cardinality differences, reported as mean$\pm$sample SD over seeds $7/42/61/123$, with native selectors and common geometric filters. Rate differences are percentage points. Count MAE is in layers. Every paired seed shares the displayed direction, including losses in LD valid-query accuracy, transparent-point recall, and four-return count accuracy.}
\label{tab:empirical-primary-paired}
\begingroup
\AppendixNumericTableStyle
\begin{tabular*}{\linewidth}{@{\extracolsep{\fill}}llr@{}}
\toprule
\textbf{Benchmark} & \textbf{Outcome} & \shortstack[r]{\textbf{Signed change}\\\textbf{Count reg. $-$ fixed}} \\
\midrule
LD-Real & Ordinal ACC $\uparrow$ & $-8.5_{\mathord{\pm}1.3}$ \\
LD-Real & Tuple-level OverPred $\downarrow$ & $-91.0_{\mathord{\pm}0.8}$ \\
MD-3K & Pattern ACC $\uparrow$ & $+65.8_{\mathord{\pm}1.2}$ \\
MD-3K & Transparent-point Recall $\uparrow$ & $-20.9_{\mathord{\pm}1.0}$ \\
MD-3K & Opaque-point OverPred $\downarrow$ & $-89.9_{\mathord{\pm}0.5}$ \\
LD-Syn & Occupied-micro count ACC $\uparrow$ & $+91.32_{\mathord{\pm}0.18}$ \\
LD-Syn & Cardinality-macro count ACC $\uparrow$ & $+48.07_{\mathord{\pm}0.28}$ \\
LD-Syn & Count MAE $\downarrow$ & $-2.177_{\mathord{\pm}0.032}$ \\
LD-Syn & Four-return count ACC $\uparrow$ & $-21.55_{\mathord{\pm}0.80}$ \\
\bottomrule
\end{tabular*}
\endgroup
\end{table}

Table~\ref{tab:empirical-primary-paired} shows that count regularization reduces false emission and improves aggregate count accuracy, but loses valid returns, especially at full cardinality. These effects compare complete configurations, not an isolated count loss. Converting rejection to overprediction reverses signs without changing SD; four paired seeds give an exact two-sided sign-flip resolution of $.125$.

\paragraph{Released-model transfer.}
\label{sec:exp-external}

Table~\ref{tab:iclr-external} complements controlled ablations with released models, without retraining. Appendix~\ref{sec:released-adapters} specifies native-output adapters and GT-calibrated depth evaluation.

\subsection{Primary Four-Seed Geometric Recovery}
\label{sec:primary-geometry}

Count agreement alone does not establish which surfaces are recovered. Table~\ref{tab:primary-geometry} therefore reports per-rank geometry and support for the same
primary terminal-800k checkpoints and four seeds as the main comparison. Each
checkpoint is evaluated on all 500 \LDSyn validation images with the prescribed
decoder. Ranks 1--4 correspond to benchmark channels L1/L3/L5/L7. We pool pixels
within each seed and report the mean and sample SD across seeds $7/42/61/123$.
The SD therefore measures training-seed variation, excluding image-sampling uncertainty.

To separate localization from retained support, let $G_r$ and $P_r$ contain pixels with valid annotated and decoded depth at rank $r$,
respectively, and let $I_r=G_r\cap P_r$. Conditional AbsRel averages relative absolute error over $I_r$. RMS is the
square root of mean squared error on $I_r$. Conditional $\delta_1$ is the fraction of $I_r$ satisfying
$\max(\hat d_r/d_r,d_r/\hat d_r)<1.25$. Target-support recall is
$|I_r|/|G_r|$. With $C_r$ the subset satisfying this depth criterion,
depth-accurate recall and precision are $|C_r|/|G_r|$ and $|C_r|/|P_r|$.
We recover $|C_r|$ from archived conditional $\delta_1$ and intersection counts,
verifying integrality at every checkpoint and rank. These scores assess depth
and presence per rank. Appendix~\ref{sec:iclr-geometric-learning} tests recovery of all ray returns via absolute-tolerance injective matching on a separate cohort.

\begin{table}[t]
\centering
\AppendixTableSetup
\caption{\textbf{Depth error, support, and depth-and-presence recovery for the primary five systems.}
Mean$\pm$sample SD over four training seeds. All entries except RMS (meters) are
percentages. AbsRel, RMS, and $\delta_1$ are conditional on both same-rank values
being valid. Support and depth-accurate recall use the fixed GT denominators shown
below. Depth-accurate precision uses each system's predicted support. Different
conditional supports preclude interpreting AbsRel alone as complete recovery.}
\label{tab:primary-geometry}
\AppendixNumericTableStyle
\begin{tabular*}{\linewidth}{@{\extracolsep{\fill}}lrrrrrr@{}}
\toprule
\TableHead{Method} & \TableHead{AbsRel $\downarrow$} & \TableHead{RMS (m) $\downarrow$}
& \TableHead{$\delta_1$ $\uparrow$} & \TableHead{Support $\uparrow$}
& \TableHead{Depth rec. $\uparrow$} & \TableHead{Depth prec. $\uparrow$} \\
\midrule
\rowcolor{appendixRankBand}[0pt][0pt]
\multicolumn{7}{@{}l@{}}{\textbf{Rank 1 (L1)}\quad $|G_r|=460\,008\,237$ pixels}\\
Fixed count & $14.02_{\mathord{\pm}0.19}$ & $0.240_{\mathord{\pm}0.001}$ & $85.34_{\mathord{\pm}0.16}$ & $100.00_{\mathord{\pm}0.00}$ & $85.34_{\mathord{\pm}0.16}$ & $85.20_{\mathord{\pm}0.16}$ \\
Count reg. & $14.14_{\mathord{\pm}0.07}$ & $0.235_{\mathord{\pm}0.001}$ & $85.41_{\mathord{\pm}0.12}$ & $99.99_{\mathord{\pm}0.00}$ & $85.40_{\mathord{\pm}0.12}$ & $85.35_{\mathord{\pm}0.12}$ \\
MAP & $14.17_{\mathord{\pm}0.11}$ & $0.268_{\mathord{\pm}0.002}$ & $84.71_{\mathord{\pm}0.10}$ & $100.00_{\mathord{\pm}0.00}$ & $84.71_{\mathord{\pm}0.10}$ & $84.61_{\mathord{\pm}0.10}$ \\
ExactMB & $14.26_{\mathord{\pm}0.10}$ & $0.263_{\mathord{\pm}0.004}$ & $84.72_{\mathord{\pm}0.16}$ & $99.99_{\mathord{\pm}0.01}$ & $84.71_{\mathord{\pm}0.16}$ & $84.62_{\mathord{\pm}0.16}$ \\
Ordered & $14.22_{\mathord{\pm}0.32}$ & $0.237_{\mathord{\pm}0.003}$ & $85.55_{\mathord{\pm}0.14}$ & $100.00_{\mathord{\pm}0.00}$ & $85.55_{\mathord{\pm}0.14}$ & $85.45_{\mathord{\pm}0.14}$ \\
\addlinespace[0.25ex]
\rowcolor{appendixRankBand}[0pt][0pt]
\multicolumn{7}{@{}l@{}}{\textbf{Rank 2 (L3)}\quad $|G_r|=57\,555\,106$ pixels}\\
Fixed count & $17.22_{\mathord{\pm}0.21}$ & $0.550_{\mathord{\pm}0.018}$ & $80.89_{\mathord{\pm}0.57}$ & $99.15_{\mathord{\pm}0.04}$ & $80.20_{\mathord{\pm}0.59}$ & $10.07_{\mathord{\pm}0.08}$ \\
Count reg. & $16.00_{\mathord{\pm}0.11}$ & $0.461_{\mathord{\pm}0.003}$ & $83.84_{\mathord{\pm}0.30}$ & $89.89_{\mathord{\pm}0.28}$ & $75.36_{\mathord{\pm}0.35}$ & $74.96_{\mathord{\pm}0.35}$ \\
MAP & $16.99_{\mathord{\pm}0.15}$ & $0.602_{\mathord{\pm}0.007}$ & $78.77_{\mathord{\pm}0.46}$ & $97.46_{\mathord{\pm}0.06}$ & $76.78_{\mathord{\pm}0.42}$ & $53.95_{\mathord{\pm}0.49}$ \\
ExactMB & $17.86_{\mathord{\pm}0.17}$ & $0.608_{\mathord{\pm}0.019}$ & $78.46_{\mathord{\pm}0.44}$ & $96.59_{\mathord{\pm}0.14}$ & $75.78_{\mathord{\pm}0.49}$ & $54.04_{\mathord{\pm}1.24}$ \\
Ordered & $16.41_{\mathord{\pm}0.25}$ & $0.491_{\mathord{\pm}0.002}$ & $82.69_{\mathord{\pm}0.22}$ & $97.77_{\mathord{\pm}0.18}$ & $80.85_{\mathord{\pm}0.25}$ & $58.33_{\mathord{\pm}0.52}$ \\
\addlinespace[0.25ex]
\rowcolor{appendixRankBand}[0pt][0pt]
\multicolumn{7}{@{}l@{}}{\textbf{Rank 3 (L5)}\quad $|G_r|=28\,454\,418$ pixels}\\
Fixed count & $18.09_{\mathord{\pm}0.16}$ & $0.606_{\mathord{\pm}0.012}$ & $78.75_{\mathord{\pm}0.10}$ & $95.76_{\mathord{\pm}0.09}$ & $75.41_{\mathord{\pm}0.09}$ & $5.21_{\mathord{\pm}0.06}$ \\
Count reg. & $16.82_{\mathord{\pm}0.19}$ & $0.517_{\mathord{\pm}0.008}$ & $81.04_{\mathord{\pm}0.17}$ & $79.08_{\mathord{\pm}0.10}$ & $64.08_{\mathord{\pm}0.07}$ & $66.67_{\mathord{\pm}0.57}$ \\
MAP & $19.30_{\mathord{\pm}0.51}$ & $0.716_{\mathord{\pm}0.009}$ & $74.47_{\mathord{\pm}0.40}$ & $93.34_{\mathord{\pm}0.19}$ & $69.51_{\mathord{\pm}0.49}$ & $35.73_{\mathord{\pm}0.12}$ \\
ExactMB & $21.04_{\mathord{\pm}0.26}$ & $0.735_{\mathord{\pm}0.022}$ & $74.19_{\mathord{\pm}0.09}$ & $89.76_{\mathord{\pm}0.37}$ & $66.60_{\mathord{\pm}0.29}$ & $34.35_{\mathord{\pm}0.64}$ \\
Ordered & $17.36_{\mathord{\pm}0.30}$ & $0.594_{\mathord{\pm}0.009}$ & $79.57_{\mathord{\pm}0.15}$ & $93.11_{\mathord{\pm}0.29}$ & $74.09_{\mathord{\pm}0.32}$ & $42.39_{\mathord{\pm}0.24}$ \\
\addlinespace[0.25ex]
\rowcolor{appendixRankBand}[0pt][0pt]
\multicolumn{7}{@{}l@{}}{\textbf{Rank 4 (L7)}\quad $|G_r|=11\,155\,679$ pixels}\\
Fixed count & $23.80_{\mathord{\pm}0.40}$ & $0.895_{\mathord{\pm}0.013}$ & $73.05_{\mathord{\pm}0.84}$ & $74.05_{\mathord{\pm}0.75}$ & $54.10_{\mathord{\pm}0.46}$ & $2.40_{\mathord{\pm}0.12}$ \\
Count reg. & $21.93_{\mathord{\pm}0.59}$ & $0.621_{\mathord{\pm}0.026}$ & $77.08_{\mathord{\pm}0.42}$ & $52.51_{\mathord{\pm}1.06}$ & $40.47_{\mathord{\pm}0.73}$ & $52.46_{\mathord{\pm}0.62}$ \\
MAP & $24.42_{\mathord{\pm}0.76}$ & $0.930_{\mathord{\pm}0.062}$ & $73.38_{\mathord{\pm}1.16}$ & $64.94_{\mathord{\pm}1.53}$ & $47.65_{\mathord{\pm}0.95}$ & $17.84_{\mathord{\pm}0.24}$ \\
ExactMB & $23.57_{\mathord{\pm}0.55}$ & $0.994_{\mathord{\pm}0.021}$ & $75.22_{\mathord{\pm}0.84}$ & $52.30_{\mathord{\pm}0.95}$ & $39.34_{\mathord{\pm}0.49}$ & $14.51_{\mathord{\pm}0.25}$ \\
Ordered & $22.43_{\mathord{\pm}0.53}$ & $0.843_{\mathord{\pm}0.028}$ & $74.65_{\mathord{\pm}0.89}$ & $71.39_{\mathord{\pm}0.50}$ & $53.29_{\mathord{\pm}0.83}$ & $23.27_{\mathord{\pm}0.35}$ \\
\bottomrule
\end{tabular*}
\end{table}

Ordered assignment improves depth-accurate recall and precision over
marginalization at every rank in all four paired seeds. At rank four, the gains are
$13.95\pm0.58$ and $8.76\pm0.59$ percentage points, respectively. Fixed count retains
high target coverage with low deeper-rank precision. Compared with ordered
assignment, the count-regularized model attains higher precision at ranks 2--4 with lower recall.
Thus, localization, coverage, and precision favor different configurations.

\paragraph{Metric and provenance scope.}
We analyze archived same-rank optional-depth outputs without last-visible fallback,
alignment, or clipping, excluding legacy per-layer metrics that clip to $[0.001,30]$m.
The 20 core metric hashes match the ledger, and all 80 same-rank records match
archived summaries. The current evaluator confirms these definitions, but historical
synthetic records lack a source hash identifying its version. This reanalysis uses
existing aggregate outputs without new inference.

\FloatBarrier

\subsection{Error Structure Across Cardinalities and Relations}
\label{sec:exp-error-structure}
To explain aggregate rankings, we stratify by cardinality and material pattern, then distinguish presence from ordering failures within each experiment population.
\subsubsection{Cardinality and Material-Conditioned Reversals}
\label{sec:exp-cardinality}

Figure~\ref{fig:tpami-regime-reversal} stratifies the five primary variants by true occupied count and MD material pattern. Count regularization leads at $c=1$, $2$, and $3$, with the best cardinality macro ($75.0\%$), occupied-micro exactness ($95.0\%$), and MAE (0.067 layers). At $c=4$, however, its $52.5\%$ exactness trails ordered assignment's $71.4\%$, reversing their lower-cardinality ranking.

\begin{figure}[!t]
\centering
\includegraphics[page=4,width=\textwidth]{figures/figure_assets.pdf}
\caption{\textbf{Conditioning reveals that the aggregate ranking is regime-dependent.}
\textbf{(a)} Grouped bars stratify exact decoded count by true occupied count $c$. Whiskers are sample SD, and direct values identify the count-regularized and ordered-assignment variants, which reverse at $c=4$.
\textbf{(b)} Open and filled points show under- and over-counting, respectively, on one logarithmic axis. Printed values retain native percentages and whiskers show sample SD. All intervals are strictly positive, with no numerical offsets or truncation. The log view resolves small error rates alongside the fixed-cardinality variant's 95.6\% over-counting.
\textbf{(c)} Paired bars compare marginalized (\ExactMB) and ordered assignment on TT, TO, OO, and overall \MDReal benchmark-pattern exactness. Here, $\Delta$ is ordered minus marginalized assignment. Each stratum tests the full decoded validity pattern at both query points, before ordinal fallback.
All displayed variants use the same four primary training seeds as the controlled rows in Table~\ref{tab:iclr-primary-ablation}.
The fixed-cardinality variant's high $c=4$ value reflects compulsory four-slot emission, so it cannot demonstrate successful stopping.}
\label{fig:tpami-regime-reversal}
\end{figure}

\ResultLead{Cardinality and material pattern reverse ablation rankings.}
\looseness=-1 Relative to the count-regularized model, ordered assignment changes exact count accuracy by $-4.30/-20.70/-18.43/+18.88$ points for $c=1/2/3/4$. Its aggregate changes are $-6.14$ points macro, $-5.31$ occupied-micro, and 0.087 more MAE. Rankings also depend on material pattern: relative to \ExactMB, ordered assignment improves MD pattern ACC by $9.29\pm6.95$ points on TT and $4.56\pm4.94$ on TO, but changes OO by $-1.92\pm4.78$ and raises point-level OverPred by $2.42\pm4.46$. Compulsory four-slot emission likewise reaches $74.1\%$ at $c=4$, despite a $3.7\%$ occupied-micro score and 2.244-layer MAE overall.

\subsubsection{Joint Existence and Ordering Failures}
\label{sec:exp-joint-errors}

Correct cardinality does not ensure correct ordering. At 800k and $\tau=.5$, Figure~\ref{fig:tpami-od-error-anatomy} separates existence and ordering failures on valid LD tuples in the separate ViT-L recurrent geometry study described in Appendix~\ref{sec:extended-analyses}. Figure~\ref{fig:iclr-set-quality} applies the same decomposition to all MD-3K pairs across the five primary four-seed ablation configurations at the principal operating point.
\begin{figure}[t]
\centering
\includegraphics[page=3,width=\textwidth]{figures/figure_assets.pdf}
\caption{\textbf{Ordering accuracy alone can hide incorrect presence patterns.}
In the geometry study at latest 800k and $\TauDiag{=}\TauDiagDefault$, each valid LD tuple has exactly one outcome: existence and order both correct, existence correct but order wrong, or existence wrong. Seeds are averaged within recipe family, then the 16 designed families receive equal weight. Mixed-rank tuples have more existence failures. Quadruplets have a larger ordering-failure share than pairs even when queried ranks exist.}
\label{fig:tpami-od-error-anatomy}
\end{figure}

\ResultLead{The source of failure changes with both rank composition and arity.}
From pairs to quadruplets, same-rank LD existence failures remain near $8$--$10\%$, while correct-existence/wrong-order outcomes rise from $13.72\%$ to $31.87\%$. Mixed-rank existence failures rise from $23.97\%$ to $27.15\%$. Across all recipe families and six thresholds, seed-averaged mixed-rank existence accuracy trails same-rank accuracy at every arity. Quadruplets have larger ordering-failure shares than pairs, consistent with the additional ordering constraints that must hold simultaneously.

In the primary MD study, ordered assignment exceeds \ExactMB by $4.28$ Joint ACC points and $6.11$ pattern ACC points despite lower $\Pr(O=1\mid E=1)$ ($88.9\%$ versus $91.4\%$): better presence-pattern recovery outweighs lower conditional ordering accuracy. These summaries use all 3,161 pairs, whereas headline ordinal ACC is evaluated only on the TT+TO subset.

\FloatBarrier

\subsection{Point-Process and Assignment Controls}
\label{sec:exp-modeling}
\subsubsection{Point-Process Law and Assignment Reduction}
\label{sec:exp-mechanism}

To examine point-process and assignment effects, we compare PPP, MAP, and ExactMB using a shared $K=4$ model and four seeds at 800k updates. This separate study uses a different recorded sampler and schedule. Appendix~\ref{sec:ablation-factors} describes its incomplete historical training records. Figure~\ref{fig:tpami-mechanism-ladder} contrasts PPP's global void event and repeated intensity ownership with MAP's injective Bernoulli singleton/null costs and ExactMB's marginalization of these explanations.

\begin{figure}[!t]
\centering
\includegraphics[page=19,width=\textwidth]{figures/figure_assets.pdf}
\caption{\textbf{MAP improves pattern and count accuracy over PPP. Marginalization reduces false emission.}
(a) MD pattern ACC. (b) MD point-level OverPred. (c) LD-Syn occupied-micro count ACC. (d) LD-Syn count MAE.
PPP has a global void event and permits repeated intensity ownership. MAP assignment uses injective Bernoulli--Laplace costs with null factors. \ExactMB sums those explanations. PPP uses numerical rate and log-intensity floors. Normalization describes the underlying law.
PPP$\rightarrow$MAP assignment is a bundled point-process contrast. MAP$\rightarrow$marginalized assignment changes only the criterion class among the available configuration fields.
Bars report terminal/latest 800k means over seeds $7/42/61/123$. Whiskers and $\pm$ readouts give sample SD. Higher exactness and lower OverPred and MAE indicate better performance.}
\label{fig:tpami-mechanism-ladder}
\end{figure}

\ResultLead{Changing the point-process model improves pattern and count recovery.}
PPP$\rightarrow$MAP raises MD pattern ACC by $14.39\pm2.84$ points and synthetic occupied-micro count ACC by $8.79\pm0.15$, and lowers MAE by $0.121\pm0.003$ layers across all four seeds. MD point-level OverPred changes by $+0.32\pm1.59$ points, positive in three seeds. PPP already supplies absence evidence, so this contrast combines changes in point-process law, multiplicity, and assignment.

MAP$\rightarrow$ExactMB changes only the recorded assignment criterion. Posterior summation lowers LD tuple-level OverPred by $1.56\pm0.47$ points and MD point-level OverPred by $5.31\pm1.49$, and raises MD pattern ACC by $1.96\pm1.37$ across all four seeds. Count MAE worsens by $0.0069\pm0.0030$ layers in every seed. Synthetic occupied-micro exactness changes by $-0.014\pm0.214$ points, negative in only one of four seeds. This near-zero mean does not establish equivalence. Marginalization thus reduces false emission at a small count-MAE cost. Appendix~\ref{sec:ablation-factors} reports the opposite pattern-accuracy change and greater seed variation in the primary four-seed comparison.

\FloatBarrier

\subsection{Matched Geometric-Loss Interventions}
\label{sec:exp-exactmb-geometry}
\label{sec:exp-geometry-loss}

With ExactMB fixed, Table~\ref{tab:tpami-exactmb-interventions} tests geometric supervision across seven configurations and 26 terminal checkpoints at 800k (epoch 108), sharing a four-slot recurrent decoder and $\tau=.01$. G0 includes geometric regularization, so contrasts concern schedules and added terms. This shared-seed study is separate from the primary ablations; links to historical training code are incomplete.

\begin{table}[!htbp]
\centering
\AppendixTableSetup
\caption{\textbf{Seven-arm geometric-loss study design.} All arms use ExactMB. G0 is the reference. G1 changes the final weight and ramp of sorted-layer gradient matching (SortedGrad). G5 and G6 share only seeds 42 and 123, so their difference cannot establish a replicated schedule interaction.}
\label{tab:tpami-exactmb-interventions}
\begingroup
\AppendixTextTableStyle
\begin{tabularx}{\linewidth}{@{}l>{\raggedright\arraybackslash}Xll@{}}
\toprule
\textbf{Arm} & \textbf{Intervention} & \textbf{Reference} & \textbf{Seeds} \\
\midrule
G0 & SortedGrad $\lambda=.02$; linear ramp completes at $.15T$ & --- & 7/42/61/123 \\
G1 & SortedGrad $\lambda=.05$; ramp completes at $.05T$ & G0 & 7/42/61/123 \\
G2 & Depth-channel weights $[.5,1,1.5,2]$ replace $[1,1,1,1]$ & G1 & 7/42/61/123 \\
G3 & Depth diversity: margin $.01$, $\lambda:0\to.03$ & G1 & 7/42/61/123 \\
G4 & Intensity supervision: $\gamma=.1$, $\lambda=.2$ & G1 & 7/42/61/123 \\
G5 & AssignGrad: detached assignment-aligned gradient term, $\lambda=.05$ & G0 & 7/42/123 \\
G6 & AssignGrad: detached assignment-aligned gradient term, $\lambda=.05$ & G1 & 42/61/123 \\
\bottomrule
\end{tabularx}
\endgroup
\end{table}

\begin{table}[!htbp]
\centering
\AppendixTableSetup
\caption{\textbf{Paired effects of the geometric-loss interventions.}
Mean$\pm$sample SD over the intersecting training seeds in Table~\ref{tab:tpami-exactmb-interventions}. Positive denotes improvement: accuracy differences, OverPred reduction, or count-MAE reduction. All rate effects are percentage points. The last column is in layers. A dagger marks an effect with the same nonzero direction in every paired seed. LD last-visible ordering pools mixed valid tuples. MD pattern ACC requires the complete post-decoder pair of validity patterns.}
\label{tab:empirical-auxiliary-effects}
\begingroup
\AppendixNumericTableStyle
\newcommand{\EmpiricalEffect}[3]{\ensuremath{#1_{\mathord{\pm}#2}#3}}
\begin{tabular*}{\linewidth}{@{\extracolsep{\fill}}lrrrrrrr@{}}
\toprule
\multicolumn{8}{@{}l@{}}{\textbf{Paired improvement $\uparrow$}}\\
 & \multicolumn{3}{c}{\textbf{LD-Real}} & \multicolumn{2}{c}{\textbf{MD-3K}} & \multicolumn{2}{c}{\textbf{LD-Syn}} \\
\cmidrule(lr){2-4}\cmidrule(lr){5-6}\cmidrule(l){7-8}
\textbf{Contrast} & \shortstack{\textbf{Ordinal}\\\textbf{ACC}} & \shortstack{\textbf{OverPred}\\\textbf{reduction}} & \shortstack{\textbf{Last-visible}\\\textbf{ACC}} & \shortstack{\textbf{Pattern}\\\textbf{ACC}} & \shortstack{\textbf{OverPred}\\\textbf{reduction}} & \shortstack{\textbf{Count}\\\textbf{ACC}} & \shortstack{\textbf{Count MAE}\\\textbf{reduction}} \\
\midrule
G1--G0 & \EmpiricalEffect{+1.18}{0.80}{^{\dagger}} & \EmpiricalEffect{+0.48}{0.36}{^{\dagger}} & \EmpiricalEffect{+4.34}{1.50}{^{\dagger}} & \EmpiricalEffect{+0.69}{2.53}{} & \EmpiricalEffect{-0.17}{0.96}{} & \EmpiricalEffect{-0.11}{0.16}{} & \EmpiricalEffect{-0.0002}{0.0023}{} \\
G2--G1 & \EmpiricalEffect{+0.24}{1.79}{} & \EmpiricalEffect{+0.04}{0.99}{} & \EmpiricalEffect{-0.07}{2.04}{} & \EmpiricalEffect{+0.49}{2.27}{} & \EmpiricalEffect{+0.65}{1.39}{} & \EmpiricalEffect{+0.07}{0.15}{} & \EmpiricalEffect{-0.0004}{0.0021}{} \\
G3--G1 & \EmpiricalEffect{-0.10}{1.63}{} & \EmpiricalEffect{+0.06}{1.52}{} & \EmpiricalEffect{-0.12}{1.87}{} & \EmpiricalEffect{+0.37}{3.62}{} & \EmpiricalEffect{-0.70}{1.20}{} & \EmpiricalEffect{+0.09}{0.16}{} & \EmpiricalEffect{+0.0009}{0.0030}{} \\
G4--G1 & \EmpiricalEffect{-0.02}{1.95}{} & \EmpiricalEffect{-1.84}{1.95}{} & \EmpiricalEffect{-0.49}{2.52}{} & \EmpiricalEffect{-3.01}{4.84}{} & \EmpiricalEffect{+1.81}{3.57}{} & \EmpiricalEffect{-0.11}{0.59}{} & \EmpiricalEffect{-0.0040}{0.0092}{} \\
G5--G0 & \EmpiricalEffect{+0.90}{1.90}{} & \EmpiricalEffect{+0.30}{2.38}{} & \EmpiricalEffect{+2.83}{1.50}{^{\dagger}} & \EmpiricalEffect{-1.28}{1.28}{} & \EmpiricalEffect{+0.15}{0.38}{} & \EmpiricalEffect{-0.12}{0.07}{^{\dagger}} & \EmpiricalEffect{-0.0022}{0.0020}{^{\dagger}} \\
G6--G1 & \EmpiricalEffect{-0.05}{0.74}{} & \EmpiricalEffect{+0.38}{0.89}{} & \EmpiricalEffect{-0.85}{1.26}{} & \EmpiricalEffect{-0.47}{1.77}{} & \EmpiricalEffect{+0.87}{0.88}{} & \EmpiricalEffect{-0.05}{0.08}{} & \EmpiricalEffect{-0.0010}{0.0005}{^{\dagger}} \\
\bottomrule
\end{tabular*}
\endgroup
\end{table}

\textbf{Relational gains and set decisions respond differently.}
Table~\ref{tab:empirical-auxiliary-effects} shows that the stronger, faster-ramping SortedGrad schedule improves LD valid-query accuracy and last-visible ordering while reducing LD OverPred in all four paired seeds. Effects on MD pattern ACC, MD OverPred, and synthetic count accuracy vary across seeds; mean MD OverPred rises and count accuracy falls. Depth weighting and diversity likewise show mixed directions. Intensity supervision reduces mean MD OverPred at the cost of lower MD pattern exactness and higher LD OverPred.

AssignGrad shows a related trade-off: under weak SortedGrad, it improves last-visible ordering while worsening exact count and MAE in all three seeds. Under strong SortedGrad, its mean ordering effect reverses and MAE again worsens in every seed. Matched-seed comparisons could separate schedule effects from the different three-seed populations. These results motivate evaluating geometric supervision through both relational accuracy and full visible-set recovery.

\FloatBarrier

\clearpage
\subsection{Encoder Capacity and Decoder Design}
\label{sec:exp-architecture}

\paragraph{Encoder capacity.}
Holding the recurrent decoder and ExactMB recipe fixed, Figure~\ref{fig:architecture-comparison} examines encoder capacity. Across ViT-S/B/L, LD ACC rises from $53.60\%$ to $57.44\%$ and $63.62\%$, while synthetic count ACC increases from $81.50\%$ to $85.48\%$ and $88.15\%$. MD pattern accuracy improves mainly from Small to Base ($28.52\%$ to $46.69\%$), with Large at $46.24\%$. Encoder scaling therefore improves ordinal and count recovery more consistently than exact presence patterns.

\begin{figure}[!ht]
\centering
\includegraphics[page=5,width=\linewidth]{figures/figure_assets.pdf}
\caption{\textbf{Encoder scaling improves ordinal and count recovery.}
ViT-S/B/L share a recurrent $K=4$ decoder, ExactMB, and SortedGrad weight $.05$.
Bars and error bars report means and sample SD across four matched seeds ($7/42/61/123$), at terminal-800k checkpoints with $\tau=.01$.}
\label{fig:architecture-comparison}
\end{figure}

\FloatBarrier

\paragraph{Decoder cost and recovery.}
\label{sec:exp-decoder-cost}

\begin{figure}[!ht]
\centering
\includegraphics[page=6,width=\linewidth]{figures/figure_assets.pdf}
\caption{\textbf{Decoder design trades storage for selective recovery gains.}
Top: GH200 120\,GB forward profiles ($518\times518$, batch size 1, TF32), with five warmups and 20 timed forwards per round.
Parameters exclude the 304.37M encoder. Operations use the archived MAC convention.
Bottom: ViT-L, $K=4$, ordered assignment, SortedGrad $.02$, terminal-800k checkpoints, and $\tau=.01$.
Parallel heads differ in weight sharing and parameter count. Shared recurrence (*) is a separate-cohort reference.
Error bars show sample SD across three timing rounds for FPS and four seeds for recovery.}
\label{fig:decoder-cost}
\end{figure}

Figure~\ref{fig:decoder-cost} examines decoder sharing through cost and recovery in a separate ordered-assignment study. Independent heads improve ordinal and presence-pattern accuracy over shared parallel FiLM, but selectively. LD-Syn occupied-micro count ACC rises by $.38\pm.13$ points, while four-return count ACC falls by $4.88\pm.39$ points and LD last-visible ACC falls by $3.87\pm1.30$ points (paired mean$\pm$sample SD). All four seeds share these directions. Aggregate accuracy can therefore improve while high-cardinality recovery and last-visible ordering deteriorate.

These selective gains also increase storage cost: independent heads use four times as many non-encoder parameters and raise peak memory from $1.94$ to $2.50$ GiB, despite similar operations and throughput. Shared recurrence retains compact storage but has lower profiled throughput. Profiles use randomly initialized models and exclude training, preprocessing, and postprocessing.

\FloatBarrier

\subsection{Archived Decoder Operating Points}
\label{sec:exp-practical-boundaries}
\label{sec:exp-robustness}

To relate recovery to decoding, Figure~\ref{fig:archived-recovery-operating-points} shows archived operating points from separate inference passes, not a fixed-prediction threshold sweep. It covers 12 primary MAP, ExactMB, and ordered-assignment checkpoints at $\tau\in\{.01,.1,.3,.5,.7,.9\}$. These terminal 800k checkpoints share finite-depth, depth-floor, and last-retained-gap filters. The principal threshold remains the predeclared $.01$, without test-set retuning or assumed shared calibration. Count-regularized and fixed-cardinality selectors ignore $\tau$, so export variation cannot reflect threshold sensitivity.

\begin{figure}[t]
\centering
\includegraphics[page=32,width=\linewidth]{figures/figure_assets.pdf}
\caption{\textbf{Archived recovery operating points.} (a) LD-Real ACC versus OverPred. (b) MD-3K recall versus OverPred. (c) LD-Syn occupied-micro count ACC versus MAE. Curves connect separate inference passes at $\tau=.01,.1,.3,.5,.7,.9$. Filled/hollow endpoints mark $.01/.9$. Means and SD are shown at $.01$. Upper right is preferred. Fixed-count output lies outside the displayed ranges. The count-regularized model retains its native count-based selection.}
\label{fig:archived-recovery-operating-points}
\end{figure}

Across same-checkpoint passes, ExactMB's $.5$ cutoff lowers LD ACC/OverPred by $4.90/6.07$ points, raises synthetic occupied-micro count ACC by $5.82$ points, and lowers count MAE by $.117$ layers relative to $.01$. Count agreement and false emission thus improve while ordinal recovery falls. The geometric precision--recall analysis below examines which accurate depths remain. Since cutoffs index separate exports, calibration and geometric dominance require separate tests.

\subsubsection{Geometric Precision--Recall Across Archived Operating Points}
\label{sec:primary-geometry-operating-points}

The rank-wise geometry archive contains 120 synthetic evaluations of all 20 primary terminal checkpoints, indexed by six cutoffs. The 12 Bernoulli-slot checkpoints trace operating-point curves, while the eight fixed-count and count-regularized checkpoints contribute their native exports. Each evaluation covers all 500 LD-Syn images. For each rank, we recover the integer number of depth-accurate predictions from the archived conditional $\delta_1$ score and valid-intersection count. Following Section~\ref{sec:primary-geometry}, dividing by GT-valid and prediction-valid counts gives recall and precision, respectively. At the native cutoff, all 80 rank records agree with the original primary-geometry report.

\begin{figure}[!htbp]
\centering
\includegraphics[page=18,width=\linewidth]{figures/figure_assets.pdf}
\caption{\textbf{Rank-wise geometric operating points for the primary models.}
Depth-accurate recall and precision use the strict relative-error criterion $\max(\hat d/d,d/\hat d)<1.25$, without alignment or optional-depth clipping. Curves connect archived passes at $\tau=.01,.1,.3,.5,.7,.9$. Large filled/hollow markers denote $.01/.9$, with small markers at intermediate cutoffs. Fixed-count and count-regularized variants use native $.01$ exports. Points and whiskers give four-seed means and sample SD. Rank-wise scores leave injective whole-set recovery unassessed. Axis ranges vary.}
\label{fig:primary-geometric-operating-points}
\end{figure}

\par\addvspace{\intextsep}
\noindent\begin{minipage}{\linewidth}
\makeatletter\def\@captype{table}\makeatother
\centering
\AppendixTableSetup
\caption{\textbf{Geometry at a shared, existing cutoff.}
MAP, ExactMB, and ordered assignment use $\tau=.5$. The count-regularized model retains its native selector and $.01$ export. Entries are depth-accurate recall/precision (\%, mean$\pm$sample SD). A common cutoff does not ensure equal recall, equal precision, or matched probability calibration across methods.}
\label{tab:primary-geometric-operating-point}
\begingroup
\AppendixNumericTableStyle
\begin{tabular*}{\linewidth}{@{\extracolsep{\fill}}llrrrrrr@{}}
\toprule
 & & \multicolumn{2}{c}{\textbf{Rank 2 (L3)}} & \multicolumn{2}{c}{\textbf{Rank 3 (L5)}} & \multicolumn{2}{c}{\textbf{Rank 4 (L7)}} \\
\cmidrule(lr){3-4}\cmidrule(lr){5-6}\cmidrule(l){7-8}
\textbf{Method} & $\boldsymbol{\tau}$ & \textbf{Recall}$\uparrow$ & \textbf{Precision}$\uparrow$ & \textbf{Recall}$\uparrow$ & \textbf{Precision}$\uparrow$ & \textbf{Recall}$\uparrow$ & \textbf{Precision}$\uparrow$ \\
\midrule
Count reg. & --- & $75.36_{\mathord{\pm}0.35}$ & $74.96_{\mathord{\pm}0.35}$ & $64.08_{\mathord{\pm}0.07}$ & $66.67_{\mathord{\pm}0.57}$ & $40.47_{\mathord{\pm}0.73}$ & $52.46_{\mathord{\pm}0.62}$ \\
MAP & $.5$ & $72.66_{\mathord{\pm}0.59}$ & $72.20_{\mathord{\pm}0.29}$ & $61.43_{\mathord{\pm}0.69}$ & $64.87_{\mathord{\pm}0.57}$ & $32.41_{\mathord{\pm}0.99}$ & $54.06_{\mathord{\pm}1.46}$ \\
ExactMB & $.5$ & $70.78_{\mathord{\pm}0.20}$ & $72.35_{\mathord{\pm}0.59}$ & $57.18_{\mathord{\pm}0.83}$ & $64.73_{\mathord{\pm}0.68}$ & $25.70_{\mathord{\pm}0.20}$ & $50.85_{\mathord{\pm}1.71}$ \\
Ordered & $.5$ & $75.54_{\mathord{\pm}0.42}$ & $74.23_{\mathord{\pm}0.46}$ & $63.58_{\mathord{\pm}0.35}$ & $67.99_{\mathord{\pm}0.59}$ & $36.87_{\mathord{\pm}0.93}$ & $59.88_{\mathord{\pm}1.55}$ \\
\bottomrule
\end{tabular*}
\endgroup
\end{minipage}
\par\addvspace{\intextsep}

\paragraph{Higher precision can come at the cost of recall.}
Across ExactMB's archived $.01$ and $.5$ passes in Figure~\ref{fig:primary-geometric-operating-points}, fourth-rank precision rises from $14.51\%$ to $50.85\%$ while recall falls from $39.34\%$ to $25.70\%$: paired changes are $+36.34\pm1.72$ and $-13.64\pm.66$ points. Table~\ref{tab:primary-geometric-operating-point} shows that ordered assignment at $.5$ approaches the count-regularized model's rank-2/3 operating points. At rank four, it trades lower recall ($36.87\%$ versus $40.47\%$) for higher precision ($59.88\%$ versus $52.46\%$). The changing precision gaps emphasize the role of the chosen operating point in method comparisons.

\paragraph{Operating-point and metric scope.}
These are comparisons between archived prediction passes. Despite its $\tau$-independent selector, the count-regularized model varies across exports by up to $2.34$ recall and $2.21$ precision points. We therefore use its native export while the source of this variation remains unresolved. Curves retain annotation ties and same-rank pairing, with no calibrated or independently validated thresholds. Their aggregate records assess rank-wise recovery but lack the joint per-ray residuals needed for injective matching, whole-set correctness, or geometric-tolerance sweeps. Section~\ref{sec:iclr-geometric-learning} examines these on separate fixed rays with absolute tolerances.

\FloatBarrier

\subsection{Additional Real-World Qualitative Results}
\label{sec:exp-real-qualitative}

\textbf{Qualitative out-of-distribution multilayer depth comparisons.}
Figures~\ref{fig:qual-comparison-mirror}--\ref{fig:qual-captured-cafe} visualize retained depth on fourteen real-world images, comparing the supplementary video's auxiliary-regularized ExactMB checkpoint with released SeeGroup, LaRI, and World Tracing models. World Tracing uses the r69l scene and r75b object releases, matching Table~\ref{tab:iclr-external}. These scenes are out of distribution for our synthetic-only task training. ExactMB's deeper layers show spatially selective support, while several released-model outputs repeat foreground or background structure across ranks.

\textbf{Additional captured scenes.}
The eight additional photographs in Figures~\ref{fig:qual-captured-stemware}--\ref{fig:qual-captured-cafe} span tabletop glassware, decorative enclosures, flower arrangements, and larger interiors. Glassware and vase examples show support differences around transparent objects; bathroom, restaurant, and cafe views extend the comparison to cluttered interiors with localized transparency. Together, they illustrate how depth and retained support vary across ranks and scene content.

\textbf{Display protocol.}
We sort valid decoded depths front to back per pixel. Each method and image uses a logarithmic color scale shared across retained ranks. Colors indicate relative depth, not comparable distances across methods. Gray denotes no retained depth, and dashes mark ranks beyond decoded capacity. ExactMB retains finite depths above $10^{-4}$\,m with $q>.5$, without gap filtering. SeeGroup uses its released evaluation decoder, while LaRI's object model retains native stopping. LaRI and World Tracing object models receive full-frame images without automatic background removal.

\begin{figure}[!p]
\centering
\includegraphics[page=24,width=\linewidth]{figures/figure_assets.pdf}
\caption{\textbf{Qualitative out-of-distribution multilayer depth comparisons.} Mirror and bottle.}
\label{fig:qual-comparison-mirror}
\end{figure}

\begin{figure}[!p]
\centering
\includegraphics[page=29,width=\linewidth]{figures/figure_assets.pdf}
\caption{\textbf{Qualitative out-of-distribution multilayer depth comparisons.} Chairs and tables viewed through a glass partition, with seating at different depths (MD-3K 743).}
\label{fig:qual-comparison-seating}
\end{figure}

\begin{figure}[!p]
\centering
\includegraphics[page=28,width=\linewidth]{figures/figure_assets.pdf}
\caption{\textbf{Qualitative out-of-distribution multilayer depth comparisons.} A window display with two astronaut figures behind glass and a colorful backdrop (MD-3K 540).}
\label{fig:qual-comparison-display}
\end{figure}

\begin{figure}[!p]
\centering
\includegraphics[page=27,width=\linewidth]{figures/figure_assets.pdf}
\caption{\textbf{Qualitative out-of-distribution multilayer depth comparisons.} A glass entrance above a tiled floor, with a door seam and red seating behind the glass (MD-3K 3134).}
\label{fig:qual-comparison-entrance}
\end{figure}

\begin{figure}[!p]
\centering
\includegraphics[page=26,width=\linewidth]{figures/figure_assets.pdf}
\caption{\textbf{Qualitative out-of-distribution multilayer depth comparisons.} A clothing display behind glass, with hanging garments above a wooden floor (MD-3K 2814).}
\label{fig:qual-comparison-clothing}
\end{figure}

\begin{figure}[!p]
\centering
\includegraphics[page=25,width=\linewidth]{figures/figure_assets.pdf}
\caption{\textbf{Qualitative out-of-distribution multilayer depth comparisons.} Clear glass vessels beside a raised fruit bowl, with overlapping transparent surfaces (LD-Real 179).}
\label{fig:qual-comparison-vessels}
\end{figure}

\begin{figure}[!p]
\centering
\includegraphics[page=15,width=\linewidth]{figures/figure_assets.pdf}
\caption{\textbf{Qualitative out-of-distribution multilayer depth comparisons.} Captured glassware arrangement with white flowers and stemware on a round tray.}
\label{fig:qual-captured-stemware}
\end{figure}

\begin{figure}[!p]
\centering
\includegraphics[page=12,width=\linewidth]{figures/figure_assets.pdf}
\caption{\textbf{Qualitative out-of-distribution multilayer depth comparisons.} Captured display of stacked pumpkin decorations enclosed by a transparent glass dome.}
\label{fig:qual-captured-dome}
\end{figure}

\begin{figure}[!p]
\centering
\includegraphics[page=9,width=\linewidth]{figures/figure_assets.pdf}
\caption{\textbf{Qualitative out-of-distribution multilayer depth comparisons.} Captured tabletop display with clear bottles and a tumbler against an opaque background.}
\label{fig:qual-captured-bottles}
\end{figure}

\begin{figure}[!p]
\centering
\includegraphics[page=13,width=\linewidth]{figures/figure_assets.pdf}
\caption{\textbf{Qualitative out-of-distribution multilayer depth comparisons.} Captured kitchen scene with red flowers in a ribbed glass vase beside a metal faucet.}
\label{fig:qual-captured-faucet}
\end{figure}

\begin{figure}[!p]
\centering
\includegraphics[page=8,width=\linewidth]{figures/figure_assets.pdf}
\caption{\textbf{Qualitative out-of-distribution multilayer depth comparisons.} Captured flower arrangement in a rounded amber glass vase on a wooden tabletop.}
\label{fig:qual-captured-amber}
\end{figure}

\begin{figure}[!p]
\centering
\includegraphics[page=14,width=\linewidth]{figures/figure_assets.pdf}
\caption{\textbf{Qualitative out-of-distribution multilayer depth comparisons.} Captured bathroom interior with a glass shower screen in front of a toilet and tiled wall.}
\label{fig:qual-captured-shower}
\end{figure}

\begin{figure}[!p]
\centering
\includegraphics[page=11,width=\linewidth]{figures/figure_assets.pdf}
\caption{\textbf{Qualitative out-of-distribution multilayer depth comparisons.} Captured restaurant interior with glassware along a dining table and furnishings behind it.}
\label{fig:qual-captured-dining}
\end{figure}

\begin{figure}[!p]
\centering
\includegraphics[page=10,width=\linewidth]{figures/figure_assets.pdf}
\caption{\textbf{Qualitative out-of-distribution multilayer depth comparisons.} Captured cafe interior with hanging glassware, bottles, and shelves at different depths.}
\label{fig:qual-captured-cafe}
\end{figure}

\FloatBarrier

\FloatBarrier

\FloatBarrier
\Needspace{10\baselineskip}
\section{Learning Dynamics and Geometric Recovery}
\suppressfloats[t]
\label{sec:iclr-count-geometry-diagnostics}

The aggregate gradient in Equation~\eqref{eq:iclr-mass} links ExactMB to expected
count, but emitted cardinality and geometric recovery also depend on individual
presence scores and candidate depths.
The primary four-seed histories track expected-count fitting
and discrete activation. A separate auxiliary-regularized trajectory follows
uncertainty and geometry on fixed rays, distinguishing missing candidate depths
from losses during decoding. A third study examines cardinality weighting and
retained support. The latter two studies provide descriptive evidence beyond the
primary replications.

\subsection{Count, Activation, and Held-Out Learning}
\label{sec:exp-optimization}
\label{sec:iclr-replicated-learning}

We first compare the primary MAP, ExactMB, and ordered-assignment histories, using
seeds $7/42/61/123$ through update 800,000. Training diagnostics average
minibatches, while synthetic validation uses 64 fixed images and weights contributing
images equally within each target cardinality. We measure raw activation at
$q>.5$, compared with $.01$ in the
principal decoder. Numerical summaries use unsmoothed records without selecting
maxima, with sample SD across training seeds.

\begin{figure}[!htbp]
 \centering
 \includegraphics[page=2,width=.99\textwidth]{figures/figure_assets.pdf}
 \caption{\textbf{Expected count and discrete activation follow different trajectories.}
 Learning histories for primary MAP, ExactMB, and ordered assignment under the
 protocol in Section~\ref{sec:exp-optimization}.
 (a) Minibatch expected-count bias. (b) Minibatch thresholded count at $q>.5$.
 Thin curves show individual seeds and bold curves their means, smoothed over a
 centered 200-update window for display. Shading marks the raw 1.8--2.0k summary window.
 (c) Conditional counts in that window, weighted by cardinality-specific pixel
 exposure. (d) Held-out conditional counts, averaging seven checkpoints from
 762.2k to 800k on 64 fixed images. Symbols and whiskers show seed mean
 $\pm$ sample SD. Dashed marks denote target counts.
 Diagnostic and principal gates are $.5$ and $.01$.}
 \label{fig:primary-learning-dynamics}
\end{figure}

\paragraph{Expected count and discrete activation.}
The expected count is $\bar c=\sum_jq_j$, whereas the thresholded count is
$\hat c_{.5}=\sum_j\mathbf1[q_j>.5]$. The summed gradient
in~\eqref{eq:iclr-mass} supplies an expected-count signal without determining
the occupied subset. For example, four probabilities of $.25$ have expected count
one but activate none at $.5$. Even expected-count agreement in aggregate can hide
canceling ray-wise errors.
Figure~\ref{fig:primary-learning-dynamics} compares expected and thresholded counts,
both overall and by target cardinality.

This distinction is visible early in training. Over raw updates 1.8--2.0k, the target mean is $1.200\pm.027$ layers.
MAP, ExactMB, and ordered assignment predict expected means
$1.167\pm.012$, $1.175\pm.020$, and $1.174\pm.021$, but thresholded means
$1.126\pm.012$, $.274\pm.047$, and $1.145\pm.022$.
ExactMB's activation delay is cardinality-selective: one-layer pixels supply $88.7\%$ of early
exposure, and ExactMB activates $.037\pm.010$ components at $c=1$, versus
MAP's $1.023\pm.007$ and ordered assignment's $1.029\pm.005$.
At $c=4$, ExactMB instead activates $2.954\pm.119$, versus
$2.041\pm.136$ and $2.172\pm.060$, and is higher in every seed.
These activation differences occur despite similar expected counts. Early same-ray
diagnostics and neural subset/correspondence controls would help test the mechanism
behind this delay and any relation to the symmetric saddle.

\paragraph{Held-out high-cardinality deficit.}
By termination, the largest activation deficit occurs at high cardinality. Over the seven-checkpoint
terminal window, held-out counts at $q>.5$ average
$1.07$--$1.08$, $1.93$--$1.94$, $2.42$--$2.43$, and $2.56$--$2.60$ layers
for $c=1/2/3/4$. The $c=4$ stratum occurs in 53 of 64 images but only
$1.42\%$ of pixels, so all-pixel averages dilute a roughly $1.4$-layer
conditional deficit. To examine these learning signals, we compare
costs near 125k and termination:
training uses complete 7,400-update cycles and excludes the single-layer-dominated
final partial cycle, while validation averages three baseline checkpoints and seven
checkpoints in the terminal window.

We probe all three variants with a temperature-one
injection posterior and occupied-existence cost $C_{\rm occ}=-\sum_j\rho_j\log q_j$.
Only ExactMB uses this posterior in training; MAP and ordered assignment use their
own assignment rules. In all four seeds, each variant reduces this diagnostic cost
on training batches but increases it on held-out data.
For ExactMB, the paired changes are $-.0365\pm.0037$ and
$+.0681\pm.0352$ nats/pixel. Interpretation must account for differences in cropping, model mode,
composition, averaging, and posterior weights. Training is pixel-exposure-weighted,
while validation weights contributing images equally within each cardinality.
Matched training--validation analysis would help relate these contextual
differences to the held-out high-cardinality deficit.

\subsection{Fixed-Ray Uncertainty and Geometric Recovery}
\label{sec:saved-ray-dynamics}

Count histories alone do not show whether the proposed depths explain the target
surfaces. We therefore track assignment uncertainty and geometric recovery in an
ExactMB run (seed 6) with an assignment-aligned gradient auxiliary of weight $.05$.
Native $432\times768$ predictions cover 20 evenly spaced validation
images at nine epochs $22/40/61/82/100/121/142/160/181$
(162.8k--1,339.4k updates). This separate auxiliary-regularized trajectory complements
the primary four-seed study with a view of late-stage refinement.
A fixed random seed samples up to 64 rays per cardinality and image
from first-stage target masks. Sampled coordinates and supplied
target depths remain fixed.

Excluding 64 rays with no valid targets and 185 occupied rays with exact ties leaves
4,513 positive, finite, distinct-target rays. The $M=1/2/3/4$ supports are
$1{,}280/1{,}216/1{,}152/865$ rays from $20/19/19/18$ images. Tie exclusion removes
184 of 1,049 sampled four-target rays ($17.54\%$). One three-target image retains
only one ray. We first average within each image/cardinality group, then equally across
contributing images, with variability reported across images. Without a complete-ray
flag, recovery is evaluated against supplied targets of unknown completeness.

\paragraph{Assignment uncertainty.}
For each supplied distinct target $\mathcal T$ of size $M$, we evaluate the injection posterior $\omega_\phi$ from Equation~\eqref{eq:iclr-posterior}. Let $\Phi$ denote the random assignment and $S=\im(\Phi)$ its occupied subset. Using~\eqref{eq:count-conditional-subset-terms}, the subset posterior is
\begin{equation}
 \pi_S=\sum_{\phi:\im(\phi)=S}\omega_\phi
 =\frac{w_q(S)G_S(\mathcal T)}{\sum_{S'}w_q(S')G_{S'}(\mathcal T)}.
 \label{eq:subset-posterior}
\end{equation}
Geometry reweights each subset's prior. Within a fixed subset, presence factors cancel, leaving correspondence dependent on localization densities. Shannon entropy with natural logarithms separates uncertainty about which components are occupied from uncertainty about their target correspondence:
\begin{equation}
 H(\Phi\mid\mathcal T)=H(S\mid\mathcal T)
 +\E_{S\mid\mathcal T}H(\Phi\mid S,\mathcal T).
 \label{eq:selection-entropy-chain}
\end{equation}
The subset and correspondence terms have maxima $\log\binom KM$ and $\log M!$, respectively. We normalize each term per ray by its own maximum, assigning zero to a singleton event space, then average equally across contributing images within each cardinality. Missing strata are excluded. The separately normalized terms need not sum to the normalized total assignment entropy.

\begin{figure}[!t]
 \centering
 \includegraphics[page=23,width=.96\textwidth]{figures/figure_assets.pdf}
 \caption{\textbf{Lower assignment uncertainty need not improve the count log score.}
 An auxiliary-regularized ExactMB trajectory (seed 6, weight $.05$), separate from
 the primary study, evaluated on the fixed rays with distinct supplied targets in
 Section~\ref{sec:saved-ray-dynamics}.
 (a) Active-subset entropy. (b) Conditional-correspondence entropy. Both are
 normalized per ray by their respective combinatorial maxima. Subset entropy at $M=4$ and
 correspondence entropy at $M=1$ are zero by construction.
 (c) Absolute expected-count error. (d) Pre-decoder observed-count NLL.
 Lines connect nine measured stages without smoothing. Means weight images
 equally within cardinality. All stages follow the same sampled rays and supplied
 targets within this single training run.}
 \label{fig:archived-uncertainty}
\end{figure}

\paragraph{Assignment uncertainty and count scores.}
Figure~\ref{fig:archived-uncertainty} tracks these normalized entropies alongside expected-count error and count NLL. From first to last stage, subset entropy falls
$.289\to.178$ at $M=2$ and $.378\to.222$ at $M=3$, while correspondence entropy
changes $.542\to.494$ and $.600\to.524$. At $M=4$, subset entropy is identically
zero, but correspondence remains $.787\to.719$. Lower entropy describes more
concentrated assignments, but count likelihood need not improve in parallel.
For $M=4$, expected-count MAE falls $1.350\to1.224$ layers while count
NLL rises $2.627\to2.845$ nats/ray. Paired changes are
$-.126\pm.415$ layers and $+.217\pm1.575$ nats. At~$M=K$, these scores are
$\sum_j(1-q_j)$ and $-\sum_j\log q_j$: their different penalties permit opposite
trends. The image-level dispersion describes variation within this single training
run. Geometric evaluation is also needed because correspondence depends on the
learned localization scales.

These entropy trends are insensitive to the numerical treatment of saturated
probabilities. Without saved logits, sigmoid endpoints are moved to the nearest
interior float32 value. Clipping instead to $[10^{-6},1-10^{-6}]$ changes
image-equal mean absolute subset and correspondence entropies by at most
$7.52\times10^{-7}$ and $1.42\times10^{-9}$ nats, respectively. No sampled
observed-count probability is zero.

\paragraph{Aggregate summaries obscure multilayer uncertainty.}
To assess the effect of population averaging, we extend the analysis beyond the sampled distinct-target rays above to
all $6{,}635{,}520$ pixels in the same 20 archived rasters. Only $8.67\%$ have at least
two supplied targets. From epochs 22 to 181, the fraction of pixel--slot
probabilities with $q_j\leq.05$ or $q_j\geq.95$ rises $89.90\to93.30\%$ overall,
versus $39.41\to60.41\%$ on multilayer rays. Pooling obscures uncertainty
on multilayer rays.

Localization scales show a related aggregation effect. For posterior-co-occupied
slot pairs, let the multiplicative scale gap be
$G_\beta=\exp(\mathbb E[|\log\beta_j-\log\beta_k|])$, with expectations weighted
by joint posterior occupancy across pixels and pairs. This gap narrows
$1.513\to1.311$, while the fraction of pair weight with both scales at most
$.101$\,m rises $22.34\to53.79\%$. The model's lower bound is $.1$\,m.
Conditioning instead on at least one scale above $.101$\,m gives gaps
$1.703\to1.795$, without comparable narrowing. Pair statistics draw on 19
images with multilayer targets. Thus, aggregate narrowing accompanies concentration
near the scale floor, rather than comparable narrowing among pairs with larger
scales. Empirical calibration and convergence call for separate diagnostics.

\subsubsection{Candidate Geometry Versus Selection and Filtering}
\label{sec:proposal-emission-dynamics}
\label{sec:iclr-geometric-learning}

We next connect these uncertainty summaries to geometric recovery by separating the depths the network proposes
from those the decoder retains. Let $D_{\rm all}$ contain all finite native centers and $D_{\rm emit}$ the
thresholded, depth-filtered, last-retained-gap output, preserving native component
indices. At tolerance $\epsilon$, $m_\epsilon(\mathcal T,D)$ is the maximum number
of one-to-one matches with residual at most $\epsilon$. One candidate cannot
recover multiple targets. On occupied rays,
\begin{equation}
 \begin{aligned}
 R_{\rm proposal}&=m_\epsilon(\mathcal T,D_{\rm all})/M,\\
 R_{\rm emit}&=m_\epsilon(\mathcal T,D_{\rm emit})/M,\\
 N_{\rm extra}&=|D_{\rm emit}|-m_\epsilon(\mathcal T,D_{\rm emit}).
 \end{aligned}
 \label{eq:proposal-emission-matching}
\end{equation}
Complete geometric recovery requires both the correct output count and a
distinct match for every target:
\begin{equation}
 \operatorname{SetOK}_\epsilon
 =\mathbf1[|D_{\rm emit}|=M\ \land\ m_\epsilon(\mathcal T,D_{\rm emit})=M].
 \label{eq:geometric-set-exactness}
\end{equation}
Proposal recall measures how many targets the native-center pool can explain before
selection, using ground-truth one-to-one matching. Unmatched emissions include both
redundancy and mislocalization, so their interpretation differs from annotation-defined
OverPred. These same-ray diagnostics use absolute tolerance, whereas the primary
rank-wise scores use relative error on a different cohort.

We fix $\epsilon=.05$\,m before inspection and test $.02/.10$\,m sensitivity.
Decoding uses strict $q>.01$, depth $>.02$\,m, and a strict $.02$\,m last-retained
gap. Each decoding stage deletes candidates without moving centers. Differences in
recall between successive stages measure losses at the existence gate, depth floor,
and gap filter. Together with emitted recall and the fraction of targets missed by
all candidates, these losses sum to one. Figure~\ref{fig:archived-geometry} accounts
for this budget on the same fixed rays.

\begin{figure}[!t]
 \centering
 \includegraphics[page=7,width=.99\textwidth]{figures/figure_assets.pdf}
 \caption{\textbf{Candidate geometry, emitted support, and count accuracy evolve differently on the same rays.}
 ExactMB (seed 6) with auxiliary regularization, matching tolerance $\epsilon=.05$\,m,
 and the fixed rays with distinct supplied targets defined in Section~\ref{sec:saved-ray-dynamics}.
 (a) First/last archived stages (epochs 22/181). Stacked bars account for the
 full target-return budget: emitted matches, gate losses, depth-floor/gap losses,
 and the deficit of all-candidate one-to-one matching. Numbers inside solid
 emitted-match segments give emitted recall. The right column gives proposal
 recall. All quantities are averaged equally across images using identical rays.
 (b) Within-image last-minus-first changes for four-target rays. Symbols and
 whiskers show mean $\pm$ sample SD across 18 images rather than training seeds.
 (c) First/last threshold curves for $M=1$. (d) First/last threshold curves for $M=4$.
 Only the existence threshold varies across nine fixed values from $.001$ to $.9$.
 Large symbols mark $\tau=.01$. Curves connect recall and unmatched-emission
 measurements across thresholds.}
 \label{fig:archived-geometry}
\end{figure}

\paragraph{Missing candidate geometry dominates the late four-target deficit.}
At the last stage of this auxiliary-regularized run, four-target proposal recall
at $.05$\,m is $29.83\%$ and emitted recall $27.63\%$.
All-candidate matching misses $70.17\%$ of target returns, versus $.224$ percentage points
lost at the gate, $1.984$ at the gap rule, and none at the depth floor. The dominant bottleneck is missing accurate candidates, rather than loss during $.01$ gating or subsequent gap filtering.

Proposal recall improves by $2.95\pm12.59$ points and emitted recall by
$1.54\pm10.90$ points across the 18 paired four-target images. Proposal/emit
recall increase in $13/12$ images, decline in two images each, and tie in the
others. Meanwhile gap-stage loss rises $.80\to1.98$ points and unmatched
emissions fall $2.487\to2.313$ per ray. Better candidate recovery and additional
pruning therefore coexist, even though the number of emitted depths decreases.
At $M=1$, proposal recall instead falls $38.67\to34.38\%$ while emitted recall
changes $25.23\to25.94\%$: the aggregate gate-loss budget shrinks even though
all-candidate matching recovers a smaller share of targets.

\paragraph{Count agreement and geometric recovery can move in opposite directions.}
On four-target rays, exact output count falls $64.37\to52.38\%$, but
$\operatorname{SetOK}_{.05}$ rises $.121\to.686\%$. Although geometric recovery
remains rare, its paired mean change is $+.565\pm1.824$ points, with three positive
images, one negative, and 14 ties. These opposing trends motivate evaluating count
agreement together with geometry, and highlight reliable four-surface recovery as
an open challenge. At $.02$\,m, emitted recall changes
$13.44\to13.88\%$ and complete geometric recovery remains zero. At $.10$\,m
they change $41.14\to42.57\%$ and $4.42\to5.35\%$. Absolute recovery levels
depend on geometric tolerance, but exact count and simultaneous geometric
recovery remain different evaluation events.

For targets separated by more than $\max(2\epsilon,.02\,\mathrm m)$, the
$.05$\,m four-target sensitivity retains only 137 rays in ten images. The
$.10$\,m sensitivity has just ten rays in three images. These small strata motivate
broader sampling to compare closely spaced and well-separated target surfaces.

To distinguish geometric learning from score changes, note that with fixed centers,
a common strictly increasing score transformation preserves
all subsets obtainable by thresholding. Changes in
proposal recall in Figure~\ref{fig:archived-geometry} therefore reflect evolving
candidate geometry beyond score remapping. These curves characterize score and
geometry changes, motivating targeted causal and calibration analyses. The
principal threshold remains fixed throughout the analysis.

\FloatBarrier
\subsection{Cardinality Weighting and Retained Support}
\label{sec:extended-analyses}
\label{sec:extended-gate-support}

\raggedbottom

We finally examine population weighting and fallback in aggregate evaluation.
This separate study compares 16 configuration families: 14 with seeds 7/61
and two with seed 61 only. Latest-800k summaries average seeds within families,
then weight families equally. Thresholds $.01/.1/.3/.5/.7/.9$ use separate inference
passes at the same checkpoints, not re-thresholding one prediction
tensor or independent replications. Selected operating points
characterize this split without held-out calibration.

First, cardinality weighting changes the apparent benefit of an operating point.
One-layer rays contribute $87.488\%$ of occupied pixels. Moving from the $.5$ to the $.3$ evaluation pass,
micro exact-count accuracy decreases by $.343$ points, while cardinality-macro accuracy
increases by $1.730$ points and accuracy at $c=4$ increases by $7.602$ points.
All 16 families share these directions. The preferred global operating point therefore depends on the cardinality weighting.

Second, shallower-depth fallback changes which predictions contribute to error.
From $\tau=.01$ to $.9$, L7 own-slot support falls $69.31\to27.29\%$, while
last-visible support remains $100.00\to99.52\%$ because an absent deep slot
inherits the last surviving shallower prediction. Fallback AbsRel consequently
appears to improve $17.15\to14.98\%$, whereas own-slot AbsRel ends slightly
worse ($21.86\to22.05\%$). Fallback therefore entangles depth quality with selective
omission, making the retained-support denominator essential to interpretation.
These family-level trends describe the separate geometry study without changing the primary decoder.

\FloatBarrier
\Needspace{10\baselineskip}
\Needspace{38\baselineskip}
\section{Output and Membership Conventions}
\label{sec:representation-details}

Multilayer methods differ in their predicted geometry and valid-output selection. Table~\ref{tab:layered-representation-taxonomy} distinguishes variable-count decoding from explicit component-wise presence and absence modeling.

\par\addvspace{\intextsep}\noindent
\begin{minipage}{\textwidth}
\makeatletter\def\@captype{table}\makeatother
\centering
\AppendixTableSetup
\caption{\textbf{Visible layers, depth hypotheses, and amodal intersections.} The cited variants differ in their native outputs, supervised geometry, and treatment of unused capacity.}
\label{tab:layered-representation-taxonomy}
\label{tab:iclr-paradigms}
\AppendixTextTableStyle
\begin{tabularx}{\textwidth}{@{}>{\raggedright\arraybackslash}p{.16\textwidth}>{\raggedright\arraybackslash}p{.24\textwidth}>{\raggedright\arraybackslash}p{.23\textwidth}>{\raggedright\arraybackslash}X@{}}
\toprule
\textbf{Method} & \textbf{Native output} & \textbf{Target geometry} & \textbf{Count and membership} \\
\midrule
MDA~\citep{bian2026ambiguity} & $K$ weighted depth components & Boundary hypotheses; transparent-layer extension & Categorical alternatives; sigmoid weights permit coexisting transparent depths \\
\addlinespace[.5ex]
DepthFocus~\citep{depthfocus2026} & One stereo depth map per scalar query & Focus-selected surface & One layer per query; no simultaneous ray-set null/count law \\
\addlinespace[.5ex]
LayeredDepth baselines~\citep{layereddepth2025} & $K$ pixel-aligned scalar depth maps & Coexisting visible layers & Fixed rank/query identity; no learned per-map null/count law \\
\addlinespace[.5ex]
SeeGroup~\citep{seegroup2026} & Four recurrent Laplace components & Coexisting visible layers & Variable count after validity and gap filtering; no component-wise Bernoulli empty event \\
\addlinespace[.5ex]
World Tracing~\citep{worldtracing2026} & Six front-to-back camera-space XYZ maps & Visible and generated occluded intersections & Fixed stack; final real point is forward-filled \\
\addlinespace[.5ex]
LaRI~\citep{lari2026} & Ordered XYZ maps and stopping index & Visible, unseen, and back-facing intersections & Learned stopping gives a $0,\ldots,L$ valid prefix \\
\midrule
\textbf{Depth Any Seen: ExactMB} & Metric depth centers, localization scales, and presence & Annotated coexisting visible returns & $0,\ldots,K$ optional components; Bernoulli absence and an injective likelihood \\
\bottomrule
\end{tabularx}
\AppendixTableNote{``Per ray'' refers to one input pixel before multiview fusion. Fixed-capacity outputs can yield variable counts through validity masks, stopping, or repeated-point removal. ExactMB's normalized law uses untruncated densities before positive-depth filtering and deterministic decoding; ordered assignment additionally fixes correspondence. Across these comparisons, annotations define visibility, and recovery from a single image may be ambiguous.}
\end{minipage}
\par\addvspace{\intextsep}

\FloatBarrier

\FloatBarrier
\endgroup

\FloatBarrier
\end{document}